\newcommand{\R}{{\mathbb{R}}}
\newcommand{\va}{{\bm{a}}}
\newcommand{\vx}{{\bm{x}}}
\newcommand{\vy}{{\bm{y}}}
\newcommand{\vz}{{\bm{z}}}
\newcommand{\vw}{{\bm{w}}}
\newcommand{\vv}{{\bm{v}}}
\newcommand{\vu}{{\bm{u}}}
\newcommand{\vf}{{\bm{f}}}
\newcommand{\vb}{{\bm{b}}}
\newcommand{\vs}{{\bm{s}}}
\newcommand{\vr}{{\bm{r}}}
\newcommand{\vbeta}{{\bm{\beta}}}
\newcommand{\vgamma}{{\bm{\gamma}}}
\newcommand{\vzeta}{{\bm{\zeta}}}
\newcommand{\veta}{{\bm{\eta}}}
\newcommand{\vxi}{\bm{\xi}}
\newcommand{\vomega}{\bm{\omega}}
\newcommand{\vpsi}{\bm{\psi}}
\newcommand{\vdelta}{{\bm{\delta}}}
\newcommand{\vlambda}{{\bm{\lambda}}}
\newcommand{\vtheta}{{\bm{\theta}}}
\newcommand{\thH}{\bm{\theta}^{(H)}}
\newcommand{\thHz}{\bm{\theta}^{(H_0)}}
\newcommand{\thHo}{\bm{\theta}^{(1)}}
\newcommand{\NN}{{\mathcal{N}}}
\newcommand{\la}{\langle}
\newcommand{\ra}{\rangle}
\newcommand{\tr}{\mathrm{Tr}}
\newcommand{\eq}[1]{Eq.~(\ref{#1})}
\newtheorem{thm}{Theorem}
\newtheorem{lma}[thm]{Lemma}
\newtheorem{prop}[thm]{Proposition}
\title{Semi-flat minima and saddle points by embedding neural networks to overparameterization}
\author[1,2]{Kenji Fukumizu}
\author[2]{Shoichiro Yamaguchi}
\author[1]{Yoh-ichi Mototake}
\author[1,3]{Mirai Tanaka}
\affil[1]{The Institute of Statistical Mathematics. 
  Tachikawa, Tokyo 190-8562, Japan. \texttt{\{fukumizu, mototake, mirai\}@ism.ac.jp} }
\affil[2]{Preferred Networks, Inc.  Chiyoda-ku, Tokyo 100-0004, Japan.   \texttt{guguchi@preferred.jp}}
\affil[3]{RIKEN.   Chuo-ku, Tokyo 103-0027, Japan}
\begin{document}

\maketitle

\begin{abstract}
  We theoretically study the landscape of the training error for neural networks in overparameterized cases.  We consider three basic methods for embedding a network into a wider one with more hidden units, and discuss whether a minimum point of the narrower network gives a minimum or saddle point of the wider one. Our results show that the networks with smooth and ReLU activation have different partially flat landscapes around the embedded point.  We also relate these results to a difference of their generalization abilities in overparameterized realization. 
\end{abstract}

\section{Introduction}

Deep neural networks (DNNs) have been applied to many problems with remarkable successes.  On the theoretical understanding of DNNs, however, many problems are still unsolved.   Among others, local minima are important issues on learning of DNNs; existence of many local minima is naturally expected by its strong nonlinearity, while people also observe that, with a large network and the stochastic gradient descent, training of DNNs may avoid this issue \cite{Keskar_ICLR2017_OnLT,pmlr-v80-kleinberg18a}.  For a better understanding of learning, it is essential to clarify the landscape of the training error.

This paper focuses on the error landscape in {\em overparameterized} situations, where the number of units is surplus to realize a function. This naturally occurs when a large network architecture is employed, and has been recently discussed in connection to optimization and generalization of neural networks (\cite{Arora_etal_icml2018,Allen-Zhu_etal_arxiv2018_overparam} to list a few).  
To formulate overparameterization rigorously, this paper introduces three basic methods, unit replication, inactive units, and inactive propagation, for embedding a network to a network of more units in some layer.  
We investigate especially the landscape of the training error around the embedded point, when we embed a minimizer of the error for a smaller model.  

A relevant topic to this paper is {\em flat minima}  \cite{Hochreiter_1995_flatminima,Hochreiter_Schnidhuber_1997_flatminima}, which have been attracting much attention in literature.  Such flatness of minima is often observed empirically, and is connected to generalization performance \cite{Chaudhari_ICLR2017_EntropySGDBG,Keskar_ICLR2017_OnLT}. There are also some works on how to define flatness appropriately and its relations to generalization \cite{Rangamani_etal_arXiv2019,Tsuzuku_etal_arXiv2019}.  Different from these works, this paper shows some embeddings cause {\em semi-flat} minima, at which a lower dimensional affine subset in the parameter space gives a constant value of error.  
We will also discuss difference between smooth activation and Rectified Linear Unit (ReLU);  at a semi-flat minimum obtained by embedding a network of zero training error, the ReLU networks have more flat directions.  Using PAC-Bayes arguments \cite{McAllester1999}, we relate this to the difference of generalization bounds between ReLU and smooth networks in overparameterized situations.

This paper extends \cite{localmin}, in which the three embedding methods are discussed and some conditions on minimum points are shown.  However, the paper is limited to three-layer networks of smooth activation with one-dimensional output, and the addition of only one hidden unit is discussed.  The current paper covers a much more general class of networks including ReLU activation and arbitrary number of layers, and discusses the difference based on the activation functions as well as a link to generalization.  

The main contributions of this paper are summarized as follows.
\begin{itemize}[topsep=0pt, partopsep=0pt, itemsep=0pt, parsep=0pt, leftmargin=11pt]
\item Three methods of embedding are introduced for the general $J$-layer networks as basic construction of overparameterized realization of a function. 
\item For smooth activation, the unit replication method embeds a minimum to a saddle point  under some assumptions. 
\item It is shown theoretically that, for ReLU activation, a minimum is always embedded as a minimum by the method of inactive units. The surplus parameters correspond to a flat subset of the training error.  The unit replication gives only a saddle point under mild conditions.
\item When a network attains zero training error, the embedding by inactive units gives semi-flat minima in both activation models. It is shown that ReLU networks give flatter minima in the overparameterized realization, which suggests better generalization through the PAC-Bayes bounds.  
\end{itemize}


All the proofs of the technical results are given in Supplements.

\section{Neural network and its embedding to a wider model}
\label{sec:embed_def}

We discuss $J$ layer, fully connected neural networks that have an activation function $\varphi(\vz;\vw)$, where $\vz$ is the input to a unit and $\vw$ is a parameter vector.  The output of the $i$-th unit $\mathcal{U}_i^q$ in the $q$-th layer is recursively defined by $z^q_i = \varphi(\vz^{q-1}; \vw^{q}_i)$, 
where 
$\vw_i^q$ is the weight between $\mathcal{U}_i^q$ and the $(q-1)$-th layer.  
The activation function $\varphi(\vz;\vw)$ is any nonlinear function, which often takes the form $\varphi(\vw_{wgt}^T\vz - w_{bias})$ with $\vw=(\vw_{wgt},w_{bias})$; typical examples are the sigmoidal function $\varphi(\vz;\vw) = \tanh(\vw_{wgt}^T\vz -w_{bias})$ and ReLU $\varphi(\vz;\vw)=\max\{\vw_{wgt}^T\vz-w_{bias}, 0\}$.  This paper assumes that there is $\vw^{(0)}$ such that $\varphi(\vx;\vw^{(0)})=0$ for any $\vx$.  
Focusing the $q$-th layer, with size of the other layers fixed, the set of networks having $H$ units in the $q$-th layer is denoted by $\NN_H$.  
With a parameter $\thH=(W_0,\vw_1,\ldots,\vw_H,\vv_1,\ldots,\vv_H,V_0)$,  the function $\vf^{(H)}_{\thH}$ of a network in $\NN_H$ is defined by
\begin{equation}\label{eq:DNN}
    \vf^{(H)}_{\thH}(\vx):=\vf^{(H)}(\vx;\thH)= \vpsi\bigl( \textstyle{\sum_{j=1}^H} \vv_j \varphi(\vx; \vw_j, W_0); V_0\bigr),
\end{equation}
where $\varphi(\vx; \vw_j,W_0)$ is the output of $\mathcal{U}_i^q$ with a summarized parameter $W_0$ in the previous layers, and $\vpsi(\vz^{q+1};V_0)$ is all the parts after $\vz^{q+1}$ with parameter $V_0$. Note that $\vv_j$ is a connection weight from the unit $\mathcal{U}^q_j$ to the units in the $(q+1)$-th layer (we omit the bias term for simplicity).  The number of units in the $(q-1)$-th and $(q+1)$-th layers are denoted by $D$ and $M$, respectively.   

Embedding of a network refers to a map associating a {\em narrower} network in $\NN_{H_0}$ ($H_0<H$) with a network of a specific parameter in a {\em wider} model $\NN_H$ to realize the same function, keeping other layers unchanged. 
For clarity, we use symbols $(\vzeta_i,\vu_i)$ instead of $(\vv_j, \vw_j)$ for the parameter $\thHz$ of $\NN_{H_0}$; 
\begin{equation}
    \vf^{(H_0)}_{\thHz}(\vx):=\vf^{(H_0)}(\vx;\thHz) =  \vpsi\bigl( \textstyle{\sum_{i=1}^{H_0}} \vzeta_i \varphi(\vx; \vu_i, W_0); V_0\bigr).
\end{equation}

We consider minima and stationary points of the {\em empirical risk} (or {\em training error})
\begin{equation}\label{eq:loss}
     L_H(\thH):= {\textstyle \sum_{\nu=1}^n} \ell(\vy_\nu, \vf^{(H)}(\vx_\nu; \thH) ),
\end{equation}
where $\ell(\vy,\vf)$ is a loss function to measure the discrepancy between a teacher $\vy$ and network output $\vf$, and $(\vx_1,\vy_1),\ldots,(\vx_n,\vy_n)$ are given training data.  Typical examples of $\ell(\vy,\vf)$ include the square error $ \|\vy - \vf\|^2/2$ and logistic loss $-y \log f - (1-y)\log (1-f)$ for $y\in \{0,1\}$ and $f\in(0,1)$. 
 In the sequel, we assume the second order differentiability of $\ell(\vy,\vf)$ with respect to $\vf$ for each $\vy$.  

\subsection{Three embedding methods of a network}
\label{sec:embed_general}

\begin{figure}[t]
    \centering
    \includegraphics[width=\textwidth]{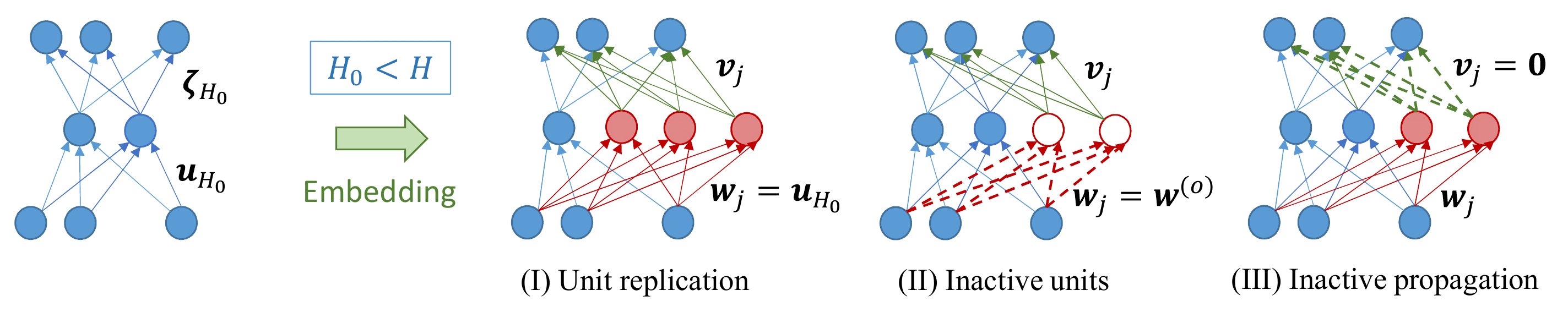}
    \vskip -3mm
    \caption{Embedding of a narrower network to a wider one.}
    \label{fig:embedding}
\end{figure}
\begin{table}[t]
    \centering
    \begin{tabular}{l|l|l}
    \hline
    \multicolumn{1}{c|}{Unit replication  $\Pi_{repl}(\thHz)$} & 
    \multicolumn{1}{c|}{Inactive units $\Pi_{iu}(\thHz)$} & 
    \multicolumn{1}{c}{Inactive propagation $\Pi_{ip}(\thHz)$}
   \\ \hline 
      $\vw_i = \vu_i$ ($1\leq i\leq H_0-1$)  &  
        $\vw_i = \vu_i$  ($1\leq i\leq H_0$)  &  
        $\vw_i = \vu_i$ ($1\leq i\leq H_0$) \\
      $\vv_i =  \vzeta_i$ ($1\leq i \leq H_0-1$)  & 
        $\vv_i =  \vzeta_i$ ($1\leq i \leq H_0$)  & 
        $\vv_i =  \vzeta_i$  ($1\leq i \leq H_0$)   \\
      $\vw_{H_0} = \cdots =\vw_H = \vu_{H_0}$ & 
         $\vw_{H_0+1}=\cdots=\vw_H=\vw^{(o)}$ & 
        $\vw_{H_0+1}, \ldots,\vw_H$: arbitrary  \\
      $\vv_{H_0}+\cdots + \vv_{H} = \vzeta_{H_0}$ & 
        $\vv_{H_0+1},\ldots,\vv_{H}$: arbitrary  & 
        $\vv_{H_0+1}=\cdots = \vv_{H} = 0$  \\
        \hline
    \end{tabular}
    \caption{Three methods of embedding}
    \vskip -5mm
    \label{tbl:embedding}
\end{table}

To fomulate overparameterization, 
we introduce three basic methods for embedding $\vf^{(H_0)}_{\thHz}$ into $\NN_H$ so that it realizes exactly the same function as $\vf^{(H_0)}_{\thHz}$.  See Table \ref{tbl:embedding} and Figure \ref{fig:embedding} for the definitions. 

{\bf (I) Unit replication: }
We fix a unit, say the $H_0$-th unit $\mathcal{U}^q_{H_0}$, in $\NN_{H_0}$, and replicate it. 
Simply, $\thH$ has $H-H_0+1$ copies of $\vu_{H_0}$, and divides the weight $\vzeta_{H_0}$ by $\vv_{H_0},\ldots,\vv_H$, keeping the other parts unchanged.  A choice of $\vu_i$ ($1\leq i\leq H_0$) to replicate is arbitrary, and a different choice defines a different network.  We use $\vu_{H_0}$ for simplicity.  
The parameters $\vv_{H_0},\ldots,\vv_H$ consist of an $(H-H_0)\times M$ dimensional affine subspace, denoted by $\Pi_{repl}(\thHz)$, in the parameters for $\NN_{H}$. 

{\bf (II) Inactive units: }
This embedding uses the special weight $\vw^{(0)}$ to make the surplus units inactive. The set of parameters is denoted by  $\Pi_{iu}(\thHz)$, which is of $(H-H_0)\times M$ dimension. 

{\bf (III) Inactive propagation: }   
This embedding cuts off the weights to the $(q+1)$-th layer for the surplus part.  The weights $\vw_j$ of  the surplus units are arbitrary. The set of parameters is denoted by  $\Pi_{ip}(\thHz)$, which is of $(H-H_0)\times D$ dimension. 

All the above embeddings give the same function as the narrower network. 
\begin{prop}\label{prop:equal_fun}
For any $\thH\in \Pi_{repl}(\thHz)\cup \Pi_{iu}(\thHz)\cup\Pi_{ip}(\thHz)$, we have 
 $   \vf^{(H)}_{\thH} = \vf^{(H_0)}_{\thHz}$.
\end{prop}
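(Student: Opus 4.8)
The unifying observation is that in all three constructions the ``downstream'' map $\vpsi(\,\cdot\,;V_0)$ and the ``upstream'' parameter $W_0$ are left untouched, as are $V_0$ itself; hence by the formula \eqref{eq:DNN} it suffices to show that the pre-activation sum into the $(q+1)$-th layer agrees, i.e.\ that
\[
  \textstyle\sum_{j=1}^H \vv_j\,\varphi(\vx;\vw_j,W_0) \;=\; \textstyle\sum_{i=1}^{H_0} \vzeta_i\,\varphi(\vx;\vu_i,W_0)
\]
for every input $\vx$. The plan is to verify this identity separately for $\thH\in\Pi_{repl}(\thHz)$, $\thH\in\Pi_{iu}(\thHz)$, and $\thH\in\Pi_{ip}(\thHz)$, reading off the weight assignments from Table~\ref{tbl:embedding}.

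For \textbf{inactive propagation}, $\vv_{H_0+1}=\cdots=\vv_H=0$ kills the terms with index $j>H_0$, while $\vw_i=\vu_i$ and $\vv_i=\vzeta_i$ for $i\le H_0$ make the remaining $H_0$ terms literally coincide. For \textbf{inactive units}, $\vw_{H_0+1}=\cdots=\vw_H=\vw^{(0)}$ together with the standing assumption $\varphi(\vx;\vw^{(0)})=0$ (for all $\vx$) kills the terms with $j>H_0$, and again the first $H_0$ terms match termwise. For \textbf{unit replication}, I would split the sum as $\sum_{i=1}^{H_0-1}\vv_i\varphi(\vx;\vw_i,W_0)+\sum_{j=H_0}^{H}\vv_j\varphi(\vx;\vw_j,W_0)$; in the first part $\vw_i=\vu_i,\vv_i=\vzeta_i$ gives a match, and in the second part $\vw_{H_0}=\cdots=\vw_H=\vu_{H_0}$ lets me factor out $\varphi(\vx;\vu_{H_0},W_0)$ and then use $\vv_{H_0}+\cdots+\vv_H=\vzeta_{H_0}$ to recover the single term $\vzeta_{H_0}\varphi(\vx;\vu_{H_0},W_0)$. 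Combining the two parts yields $\sum_{i=1}^{H_0}\vzeta_i\varphi(\vx;\vu_i,W_0)$.

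Applying $\vpsi(\,\cdot\,;V_0)$ to the common pre-activation sum then gives $\vf^{(H)}_{\thH}(\vx)=\vf^{(H_0)}_{\thHz}(\vx)$ for all $\vx$ in each of the three cases, which is the claim. There is no real obstacle here: the only points requiring care are the bookkeeping of which indices are ``new,'' and the explicit invocation of the hypothesis $\varphi(\,\cdot\,;\vw^{(0)})\equiv 0$ in the inactive-units case; everything else is a direct substitution into \eqref{eq:DNN}.
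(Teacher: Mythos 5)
Your proof is correct and is exactly the direct verification the paper intends: the paper states Proposition~\ref{prop:equal_fun} without a written proof, treating it as immediate from the definitions in Table~\ref{tbl:embedding}, and your case-by-case check of the pre-activation sum (using $\varphi(\,\cdot\,;\vw^{(0)})\equiv 0$ for inactive units, $\vv_j=0$ for $j>H_0$ for inactive propagation, and factoring out $\varphi(\vx;\vu_{H_0},W_0)$ with $\sum_{j=H_0}^{H}\vv_j=\vzeta_{H_0}$ for unit replication) is precisely that argument. No gaps.
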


It is important to note that a network is not uniquely embedded in a wider model, in contrast to fixed bases models such as the polynomial model.  This unidentifiability has been clarified for three-layer networks 
\cite{KurkovaKainen1994,Sussmann_1992}; in fact, for three layer networks of $\tanh$ activation, \cite{Sussmann_1992} shows that the three methods essentially cover all possible embedding. 
For three-layer networks of 1-dimensional output and smooth activation, \cite{localmin} shows that this unidentifiable embedding causes minima or saddle points.  The current paper extends this result to general networks with ReLU as well as smooth activation.  




\section{Embedding of smooth networks}
\label{sec:embed_smooth}

This section assumes the second order differentiability of $\varphi(\vx;\vw)$ on $\vw$.  The case of ReLU will be discussed in Section \ref{sec:min_relu}. 
Let $\thHz_*$ be a stationary point of $L_{H_0}$, i.e., 
$\frac{\partial L_{H_0}(\thHz_*)}{\partial \thHz} = {\bf 0}$.  
We are interested in whether the embedding in Section \ref{sec:embed_def} also gives a stationary point of $L_H$. More importantly, we wish to know if a minimum of $L_{H_0}$ is embedded to a minimum of $L_H$.  A network can be embedded by any combination of the three methods, but we consider their effects separately for simplicity.

\subsection{Stationary properties of embedding}

To discuss the stationarity for the case (I) unit replication, we need to restrict $\Pi_{repl}(\thHz)$ to a subset.  For $\thHz$, define $\thH_\vlambda$ for every $\vlambda=(\lambda_{H_0},\ldots,\lambda_H)\in \R^{H-H_0+1}$ with $\sum_{j=H_0}^H \lambda_j = 1$ 
by 
\begin{align}
    & \vw_i = \vu_i, \quad \vv_i =  \vzeta_i\quad (1\leq i\leq H_0-1),\nonumber \\
    & \vw_{H_0} = \cdots =\vw_H = \vu_{H_0},\qquad 
     \vv_j = \lambda_j \vzeta_{H_0} \quad (H_0\leq j \leq H).  \label{eq:embed}
\end{align}
Obviously, $\thH_\vlambda\in\Pi_{repl}(\thHz)$ so that $\vf^{(H)}_{\thH_\vlambda} = \vf^{(H_0)}_{\thHz}$.  
The next theorem shows that a stationary point of $\NN_{H_0}$ is embedded to an $(H-H_0)$-dimensional stationary subset of $\NN_{H}$.
\begin{thm}\label{thm:stationary}
Let $\thHz_*$ be a stationary point of $L_{H_0}$. Then, for any $\vlambda=(\lambda_{H_0},\ldots,\lambda_H)$ with $\sum_{j=H_0}^H\lambda_j=1$, the point $\thH_\vlambda$ defined by \eq{eq:embed} is a stationary point of $L_H$. 
\end{thm}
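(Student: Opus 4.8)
The plan is to compute $\nabla L_H(\thH_\vlambda)$ directly by the chain rule, one group of parameters at a time, and to identify each component with a partial derivative of $L_{H_0}$ at $\thHz_*$, all of which vanish because $\thHz_*$ is stationary. Two structural facts drive the argument. First, by Proposition~\ref{prop:equal_fun} the embedded network realizes the same function; more usefully, since $\sum_{j=H_0}^H\lambda_j=1$, the vector fed into $\vpsi$ at each training input, $\sum_{j=1}^H\vv_j\varphi(\vx_\nu;\vw_j,W_0)$, equals its $\NN_{H_0}$ counterpart $\sum_{i=1}^{H_0}\vzeta_i\varphi(\vx_\nu;\vu_i,W_0)$. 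Second, the replication pattern $\vw_{H_0}=\dots=\vw_H=\vu_{H_0}$ and $\vv_j=\lambda_j\vzeta_{H_0}$ means the $H-H_0+1$ copies of $\mathcal{U}^q_{H_0}$ behave, collectively, exactly like the original single unit.

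Concretely, set $\bm{g}_\nu:=\sum_{i=1}^{H_0}\vzeta_i\varphi(\vx_\nu;\vu_i,W_0)$ and let $\vr_\nu:=\frac{\partial}{\partial\bm{g}}\ell\bigl(\vy_\nu,\vpsi(\bm{g};V_0)\bigr)\big|_{\bm{g}=\bm{g}_\nu}$ be the error back-propagated to the input of $\vpsi$; by the first fact $\bm{g}_\nu$ and $\vr_\nu$ are identical in the two networks. Consequently any parameter inside $\vpsi$ (that is, inside $V_0$) satisfies $\partial L_H/\partial V_0\,(\thH_\vlambda)=\partial L_{H_0}/\partial V_0\,(\thHz_*)=0$ immediately, because $\vpsi$ is presented the same input. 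For the outgoing weights, the chain rule gives $\partial L_H/\partial\vv_i\,(\thH_\vlambda)=\sum_\nu\vr_\nu\,\varphi(\vx_\nu;\vw_i,W_0)$; for $i\le H_0-1$ this is $\sum_\nu\vr_\nu\,\varphi(\vx_\nu;\vu_i,W_0)=\partial L_{H_0}/\partial\vzeta_i\,(\thHz_*)=0$, and for $H_0\le j\le H$ it is $\sum_\nu\vr_\nu\,\varphi(\vx_\nu;\vu_{H_0},W_0)=\partial L_{H_0}/\partial\vzeta_{H_0}\,(\thHz_*)=0$ --- note that all $H-H_0+1$ of these gradients coincide, which is precisely the flat direction $\sum_j\lambda_j=1$.

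For the incoming weights, $\partial L_H/\partial\vw_i\,(\thH_\vlambda)=\sum_\nu(\vr_\nu^{T}\vv_i)\,\frac{\partial}{\partial\vw}\varphi(\vx_\nu;\vw_i,W_0)$; for $i\le H_0-1$ this equals $\partial L_{H_0}/\partial\vu_i\,(\thHz_*)=0$, while for $H_0\le j\le H$, using $\vv_j=\lambda_j\vzeta_{H_0}$ and $\vw_j=\vu_{H_0}$, it becomes $\lambda_j\sum_\nu(\vr_\nu^{T}\vzeta_{H_0})\,\frac{\partial}{\partial\vw}\varphi(\vx_\nu;\vu_{H_0},W_0)=\lambda_j\,\partial L_{H_0}/\partial\vu_{H_0}\,(\thHz_*)=0$. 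Finally, $W_0$ is shared by all units, so the chain rule sums over $j=1,\dots,H$; grouping the last $H-H_0+1$ terms and using $\sum_{j=H_0}^H\lambda_j=1$ collapses them into the single $\vzeta_{H_0}$-term, yielding $\partial L_H/\partial W_0\,(\thH_\vlambda)=\sum_\nu\sum_{i=1}^{H_0}(\vr_\nu^{T}\vzeta_i)\,\frac{\partial}{\partial W_0}\varphi(\vx_\nu;\vu_i,W_0)=\partial L_{H_0}/\partial W_0\,(\thHz_*)=0$. Since every component of $\nabla L_H(\thH_\vlambda)$ vanishes, $\thH_\vlambda$ is a stationary point of $L_H$.

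I do not expect a genuinely hard step here: the content is just the chain rule combined with the constraint $\sum_j\lambda_j=1$ and the replication identities. The only real care goes into (a) fixing notation for a general $J$-layer network so that ``everything below layer $q$'' ($W_0$) and ``everything above layer $q{+}1$'' ($\vpsi$, $V_0$) can be differentiated abstractly, and (b) keeping the index bookkeeping honest so that each parameter group above is matched with the correct derivative of $L_{H_0}$. Smoothness of $\varphi$ in $\vw$ and of $\ell$ in $\vf$ are used only to guarantee that these derivatives exist and that the chain rule applies; no positivity of the $\lambda_j$ is needed.
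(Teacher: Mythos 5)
Your proposal is correct and follows essentially the same route as the paper's own proof: the quantity $\vr_\nu$ you introduce is exactly the back-propagated error $\vdelta^{q+1,\nu}$ used there, and the component-by-component identification of $\nabla L_H(\thH_\vlambda)$ with derivatives of $L_{H_0}$ at $\thHz_*$ — including the $\lambda_j$ factor on the $\vw_j$-derivatives and the collapse of the $W_0$-sum via $\sum_j\lambda_j=1$ — matches the paper's argument step for step.
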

The basic idea for the proof is to separate the subset of parameters $(\vv_{H_0},\vw_{H_0},\ldots,\vv_H,\vw_H)$ into a copy of  $(\vzeta_{H_0},\vu_{H_0})$ and the remaining ones, the latter of which do not contribute to change the function $\vf^{(H)}_{\thH}$ at $\thH_\vlambda$.  We will see this reparameterization in Section \ref{sec:embed_min} in detail.

It is easy to see that the embedding by inactive units or propagations does not generally embed a stationary point to a stationary one.  The details will be given in Section \ref{sec:inactive_smooth}, Supplements. 


\subsection{Embedding of a minimum point in the case of smooth networks}
\label{sec:embed_min}

We next consider the embedding $\thH_\vlambda$ of a mininum point $\thHz_*$ of $L_{H_0}$.  In the sequel, for notational simplicity, we discuss three-layer models ($J=3$) and linear output units.  For general $J$, the derivatives and Hessian of $L_H$ for the other parameters are exactly the same as those of $L_{H_0}$ for the corresponding parameters, and we omit the full description.  
The two models are simply given by
\begin{equation}\label{eq:3NN}
\NN_{H}:    \vf^{(H)}(\vx; \thH) = {\textstyle \sum_{j=1}^H} \vv_j \varphi(\vx; \vw_j) \quad \text{and}\quad  
    \NN_{H_0}: \vf^{(H_0)}(\vx; \thHz) = {\textstyle \sum_{i=1}^{H_0}} \vzeta_i \varphi(\vx; \vu_i).
\end{equation}

To simplify the Hessian for unit replication, we introduce a new parameterization of $\NN_{H}$.  
Let $\vlambda\in \R^{H-H_0+1}$ be fixed such that $\lambda_{H_0}+\cdots+\lambda_H=1$ and $\lambda_j\neq 0$.  For such $\vlambda$, take an $(H-H_0)\times (H-H_0+1)$ matrix $A=(\alpha_{cj})$  ($H_0+1\leq c\leq H, H_0\leq j\leq H$) that satisfies the two conditions:

\begin{enumerate}[topsep=0pt, partopsep=0pt, itemsep=0pt, parsep=0pt]
\item[(A1)]
    $\bigl( \begin{smallmatrix} {\bf 1}_{H-H_0+1}^T \\ A \end{smallmatrix}\bigr)$
is invertible, where ${\bf 1}_d=(1,\ldots,1)^T\in\R^d$,
\item[(A2)]
$\sum_{j=H_0}^H \alpha_{cj}\lambda_j = 0$ 
for any $H_0+1\leq c\leq H$. 
\end{enumerate} 

To find such $A$, take $A=(\bm{a}_{H_0+1},\ldots,\bm{a}_{H})^T$ so that $\bm{a}_c^T\vlambda=0$.  Then, if $\sum_{c=H_0+1}^{H} s_c \bm{a}_c={\bf 1}_{H-H_0+1}$ for some scalars $s_c$, taking the inner product with $\vlambda$ causes a contradiction. 

Given such $\vlambda$ and $A=(\alpha_{cj})$, define a bijective linear transform from $(\vv_{H_0},\ldots,\vv_H;\vw_{H_0},\ldots,\vw_H)$ to  $(\va,\vxi_{H_0+1},\ldots,\vxi_{H};\vb, \veta_{H_0+1},\ldots,\veta_{H})$ by 
\begin{equation}\label{eq:new_param}
        \vw_j = \vb + {\textstyle \sum_{c=H_0+1}^H} \alpha_{cj}\veta_c \quad \text{and} \quad 
\vv_j  = \lambda_j \va + {\textstyle \sum_{c=H_0+1}^H} \lambda_j \alpha_{cj}\vxi_c  \qquad (H_0\leq j\leq H).
\end{equation}


The parameter $\vb$ serves as the direction that makes all the hidden units behave equally, and $(\veta_j)$ define the remaining $H-1$ directions that differentiate them.   The parameter $\vb$ thus essentially plays the role of $\vu_{H_0}$ for $\NN_{H_0}$. Also, $\va$ works as $\vzeta_{H_0}$ when all $\vw_j$ are equal. 
The next lemma confirms this role of $(\va,\vb)$ and shows that the directions $\veta_c$ and $\vxi_c$ do not change the function $\vf^{(H)}$ at $\thHz_\vlambda$.
\begin{lma}\label{lma:1st_der}
Let $\thHz$ be any parameter of $\NN_{H_0}$, and $\thH_\vlambda$ be its embedding defined by \eq{eq:embed}.  Then, 
\begin{align}
    & \textstyle{ \frac{\partial \vf^{(H)}(\vx; \thH) }{\partial \vb}\Bigl|_{\thH=\thH_\vlambda}  
     =  \frac{\partial \vf^{(H_0)}(\vx; \thHz) }{\partial \vu_{H_0}},  }
    &  \quad    \textstyle{\frac{\partial \vf^{(H)}(\vx; \thH) }{\partial \veta_c}\Bigl|_{\thH=\thH_\vlambda}  
     = {\bf 0}, }
\nonumber \\
    & \textstyle{\frac{\partial \vf^{(H)}(\vx; \thH) }{\partial \va}\Bigl|_{\thH=\thH_\vlambda}  
     =  \frac{\partial \vf^{(H_0)}(\vx; \thHz) }{\partial \vzeta_{H_0}},  }
    &   \quad \textstyle{  \frac{\partial \vf^{(H)}(\vx; \thH) }{\partial \vxi_c}\Bigl|_{\thH=\thH_\vlambda}  
     = {\bf 0}.}
\end{align}
\end{lma}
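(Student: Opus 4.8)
The plan is to work entirely in the new coordinates $(\va,\vb,\veta_{H_0+1},\ldots,\veta_H,\vxi_{H_0+1},\ldots,\vxi_H)$ of \eq{eq:new_param} and compute the four Jacobians by the chain rule. Note first that $\vf^{(H)}$ splits as $\sum_{j=1}^{H_0-1}\vzeta_j\varphi(\vx;\vu_j)$ plus the block $\sum_{j=H_0}^{H}\vv_j\varphi(\vx;\vw_j)$, and only the latter block depends on $\va,\vb,\veta_c,\vxi_c$. The first step is to identify where $\thH_\vlambda$ sits in the new coordinates: substituting \eq{eq:embed} into \eq{eq:new_param} and using that the stacked linear maps $\vw_j=\vb+\sum_c\alpha_{cj}\veta_c$ and $\vv_j/\lambda_j=\va+\sum_c\alpha_{cj}\vxi_c$ (recall $\lambda_j\neq0$) share the coefficient matrix of (A1), which is invertible, the requirements $\vw_{H_0}=\cdots=\vw_H=\vu_{H_0}$ and $\vv_j=\lambda_j\vzeta_{H_0}$ force $\vb=\vu_{H_0}$, $\va=\vzeta_{H_0}$, and $\veta_c=\vxi_c={\bf 0}$ for all $c$.

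Next I would substitute \eq{eq:new_param} into the block and differentiate termwise. Writing $\varphi_{\vw}:=\partial\varphi/\partial\vw$, the chain rule (the inner argument $\vb+\sum_c\alpha_{cj}\veta_c$ has $\vb$-Jacobian the identity and $\veta_c$-Jacobian $\alpha_{cj}$ times the identity) gives
\[
\frac{\partial\vf^{(H)}}{\partial\vb}=\sum_{j=H_0}^{H}\vv_j\,\varphi_{\vw}(\vx;\vw_j),\qquad
\frac{\partial\vf^{(H)}}{\partial\veta_c}=\sum_{j=H_0}^{H}\alpha_{cj}\,\vv_j\,\varphi_{\vw}(\vx;\vw_j),
\]
while, since $\va$ and $\vxi_c$ enter only linearly through $\vv_j=\lambda_j\va+\sum_c\lambda_j\alpha_{cj}\vxi_c$,
\[
\frac{\partial\vf^{(H)}}{\partial\va}=\sum_{j=H_0}^{H}\lambda_j\,\varphi(\vx;\vw_j)\,I_M,\qquad
\frac{\partial\vf^{(H)}}{\partial\vxi_c}=\sum_{j=H_0}^{H}\lambda_j\alpha_{cj}\,\varphi(\vx;\vw_j)\,I_M.
\]

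Finally I would evaluate at $\thH_\vlambda$, where every $\vw_j=\vu_{H_0}$ and every $\vv_j=\lambda_j\vzeta_{H_0}$, so each sum factors out $\varphi_{\vw}(\vx;\vu_{H_0})$ or $\varphi(\vx;\vu_{H_0})I_M$ and reduces to either $\sum_{j=H_0}^{H}\lambda_j=1$ (the constraint on $\vlambda$) or $\sum_{j=H_0}^{H}\lambda_j\alpha_{cj}=0$ (condition (A2)). This yields $\partial\vf^{(H)}/\partial\vb=\vzeta_{H_0}\varphi_{\vw}(\vx;\vu_{H_0})=\partial\vf^{(H_0)}/\partial\vu_{H_0}$ and $\partial\vf^{(H)}/\partial\va=\varphi(\vx;\vu_{H_0})I_M=\partial\vf^{(H_0)}/\partial\vzeta_{H_0}$, while the $\veta_c$ and $\vxi_c$ derivatives vanish. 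For general $J$ the computation is unchanged after composing with $\vpsi$ and the layers below, which do not depend on these parameters.

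The whole argument is routine chain-rule bookkeeping; the two points that require care are (i) correctly reading off the new-coordinate value of $\thH_\vlambda$ — the only place (A1) is actually used — and (ii) keeping track of the multidimensional nature of the derivatives, in particular the identity block $I_M$ appearing in the $\va$ and $\vxi_c$ directions because these parameters enter $\vv_j$ linearly. I expect point (i) to be the main thing a careful reader must verify, though it is brief.
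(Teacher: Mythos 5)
Your proposal is correct and follows essentially the same route as the paper: the paper's proof begins with exactly the chain-rule identities $\partial/\partial\vb=\sum_j\partial/\partial\vw_j$, $\partial/\partial\veta_c=\sum_j\alpha_{cj}\partial/\partial\vw_j$, $\partial/\partial\va=\sum_j\lambda_j\partial/\partial\vv_j$, $\partial/\partial\vxi_c=\sum_j\lambda_j\alpha_{cj}\partial/\partial\vv_j$, and then evaluates at $\thH_\vlambda$ using $\sum_j\lambda_j=1$ and $\sum_j\alpha_{cj}\lambda_j=0$, which is precisely your computation. Your additional step (i), identifying $\vb=\vu_{H_0}$, $\va=\vzeta_{H_0}$, $\veta_c=\vxi_c={\bf 0}$ via the invertibility in (A1), is a correct and slightly more careful piece of bookkeeping that the paper leaves implicit.
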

From Lemma \ref{lma:1st_der}, the Hessian takes a simple form:
\begin{lma}\label{lma:Hessian}
Let $\vlambda$ and $A$ be as above.  Suppose $\thHz_*$ is a stationary point of $\NN_{H_0}$ and $\thH_\vlambda$ is its embedding defined by \eq{eq:embed}.  Then, the Hessian matrix of $L_H$ with respect to $\vomega=(\va,\vb,\vxi_{H_0+1},\ldots,\vxi_H,\veta_{H_0+1},\ldots,\veta_H)$ at $\thH=\thH_\vlambda$ is given by 
\begin{equation}\label{eq:Hessian}
\frac{\partial^2 L_H(\thH_\vlambda)}{\partial\vomega\partial\vomega}=
\kbordermatrix{
 & \va &  \vb  & \vxi_d & \veta_d \\ 
\va & \frac{\partial^2 L_{H_0}(\thHz_*)}{\partial\vzeta_{H_0}\partial\vzeta_{H_0}} &  \frac{\partial^2 L_{H_0}(\thHz_*)}{\partial\vzeta_{H_0}\partial\vu_{H_0}} &  O  & O \\ 
\vb & {\frac{\partial^2 L_{H_0}(\thHz_*)}{\partial\vu_{H_0}\partial\vzeta_{H_0}}}  &  {\frac{\partial^2 L_{H_0}(\thHz_*)}{\partial\vu_{H_0}\partial\vu_{H_0}}}  & O  & O \\ 
 \vxi_c & O &  O  & O &   \tilde{F} \\ 
 \veta_c & O  & O  & \tilde{F}^T & \tilde{G}
}.
\end{equation}
The lower-right block $\tilde{G}:=(\frac{ \partial^2  L_H(\thH_\vlambda)}{\partial{\veta_c}\partial{\veta_d}})_{cd}$, which is a symmetric matrix of $ (H-H_0)\times D$ dimension, is given by
$\left( A \Lambda A^T \right) \otimes G$ 
with $\Lambda=\text{Diag}(\lambda_{H_0},\ldots,\lambda_H)$ and  
$
G:= {\textstyle \sum_{\nu=1}^n \frac{\partial \ell(\vy_\nu,\vf^{(H_0)}(\vx_\nu;\thHz_*))}{\partial \vz} \vzeta_{H_0*} \frac{\partial^2 \varphi(\vx_\nu; \vu_{H_0*})}{\partial \vu_{H_0}\partial\vu_{H_0}}}
$; 
and  $\tilde{F}:=(\frac{\partial^2 L_H(\thH_\vlambda)}{\partial{\vxi_c}\partial{\veta_d}})_{cd}$, which is of size $ (H-H_0)\times M$ dimension, is given by $
\left( A \Lambda A^T \right) \otimes F$
with 
$
F:=  {\textstyle\sum_{\nu=1}^n \frac{\partial \ell(\vy_\nu,\vf^{(H_0)}(\vx_\nu;\thHz_*))}{\partial \vz}  \frac{\partial \varphi(\vx_\nu; \vu_{H_0*})}{\partial \vu_{H_0}}}.
$
\end{lma}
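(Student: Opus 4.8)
The plan is to compute the entries of $\frac{\partial^2 L_H(\thH_\vlambda)}{\partial\vomega\partial\vomega}$ directly by the chain rule and then specialize to $\thH_\vlambda$, whose preimage under the reparameterization \eq{eq:new_param} is, by (A1), the single point $\va=\vzeta_{H_0*}$, $\vb=\vu_{H_0*}$, $\vxi_c=\veta_c={\bf 0}$; in particular there $\vw_j=\vu_{H_0*}$ and $\vv_j=\lambda_j\vzeta_{H_0*}$ for every $H_0\le j\le H$, and $\vf^{(H)}(\vx_\nu;\thH_\vlambda)=\vf^{(H_0)}(\vx_\nu;\thHz_*)$ by Proposition~\ref{prop:equal_fun}. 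Since the parameters $(\vv_i,\vw_i)$ with $i\le H_0-1$ are untouched, it suffices to treat the partial sum $g(\vx):=\sum_{j=H_0}^H\vv_j\varphi(\vx;\vw_j)$ and the coordinates $\vomega$; for $J>3$ the outer map $\vpsi$ and the layers preceding the $q$-th one are common to $\vf^{(H)}$ and $\vf^{(H_0)}$ and contribute identically, so they are suppressed. Everything then rests on the decomposition
\[
\frac{\partial^2 L_H}{\partial\omega_a\partial\omega_b}=\sum_{\nu=1}^n\Bigl[\Bigl(\tfrac{\partial\vf^{(H)}}{\partial\omega_a}\Bigr)^{\!T}\tfrac{\partial^2\ell}{\partial\vz\partial\vz}\tfrac{\partial\vf^{(H)}}{\partial\omega_b}+\tfrac{\partial\ell}{\partial\vz}\tfrac{\partial^2\vf^{(H)}}{\partial\omega_a\partial\omega_b}\Bigr],
\]
with the $\ell$-derivatives at $\vf^{(H_0)}(\vx_\nu;\thHz_*)$ and the rest at $\thH_\vlambda$; the first (Gauss--Newton) summand is controlled by Lemma~\ref{lma:1st_der} and the second needs explicit differentiation of $g$.

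For the Gauss--Newton part, Lemma~\ref{lma:1st_der} gives, at $\thH_\vlambda$, $\frac{\partial\vf^{(H)}}{\partial\va}=\frac{\partial\vf^{(H_0)}}{\partial\vzeta_{H_0}}$, $\frac{\partial\vf^{(H)}}{\partial\vb}=\frac{\partial\vf^{(H_0)}}{\partial\vu_{H_0}}$ and $\frac{\partial\vf^{(H)}}{\partial\vxi_c}=\frac{\partial\vf^{(H)}}{\partial\veta_c}={\bf 0}$; hence every Gauss--Newton entry touching a $\vxi_c$ or $\veta_c$ direction vanishes outright, and the $\va,\vb$ Gauss--Newton entries equal the corresponding $\vzeta_{H_0},\vu_{H_0}$ entries of the Hessian of $L_{H_0}$ at $\thHz_*$. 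For the second summand, writing $\varphi',\varphi''$ for the first and second derivatives of $\varphi$ in its weight argument and keeping $\vv_j,\vw_j$ as abbreviations for their expressions \eq{eq:new_param}, the relations $\partial\vw_j/\partial\vb=I$, $\partial\vw_j/\partial\veta_c=\alpha_{cj}I$, $\partial\vv_j/\partial\va=\lambda_jI$, $\partial\vv_j/\partial\vxi_c=\lambda_j\alpha_{cj}I$, together with the affineness of $g$ in $\va$ and in each $\vxi_c$, give $\partial^2 g/\partial\va\partial\va=\partial^2 g/\partial\va\partial\vxi_c=\partial^2 g/\partial\vxi_c\partial\vxi_d={\bf 0}$; $\partial^2 g/\partial\va\partial\vb=\sum_j\lambda_j\varphi'(\vx;\vw_j)$, $\partial^2 g/\partial\va\partial\veta_c=\partial^2 g/\partial\vb\partial\vxi_c=\sum_j\lambda_j\alpha_{cj}\varphi'(\vx;\vw_j)$, $\partial^2 g/\partial\vxi_c\partial\veta_d=\sum_j\lambda_j\alpha_{cj}\alpha_{dj}\varphi'(\vx;\vw_j)$; and $\partial^2 g/\partial\vb\partial\vb=\sum_j\vv_j\varphi''(\vx;\vw_j)$, $\partial^2 g/\partial\vb\partial\veta_c=\sum_j\alpha_{cj}\vv_j\varphi''(\vx;\vw_j)$, $\partial^2 g/\partial\veta_c\partial\veta_d=\sum_j\alpha_{cj}\alpha_{dj}\vv_j\varphi''(\vx;\vw_j)$.

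Substituting $\vw_j=\vu_{H_0*}$ and $\vv_j=\lambda_j\vzeta_{H_0*}$ finishes the computation with the two defining properties of $A$. Condition (A2) annihilates every factor $\sum_j\lambda_j\alpha_{cj}$, so the four cross blocks between $\{\va,\vb\}$ and $\{\vxi_c,\veta_c\}$ are $O$ (their Gauss--Newton parts already vanished); the normalization $\sum_j\lambda_j=1$ collapses $\partial^2 g/\partial\va\partial\vb$ to $\varphi'(\vx;\vu_{H_0*})$ and $\partial^2 g/\partial\vb\partial\vb$ to $\vzeta_{H_0*}\varphi''(\vx;\vu_{H_0*})$, which are exactly the curvature terms of $\frac{\partial^2 L_{H_0}(\thHz_*)}{\partial\vzeta_{H_0}\partial\vu_{H_0}}$ and $\frac{\partial^2 L_{H_0}(\thHz_*)}{\partial\vu_{H_0}\partial\vu_{H_0}}$; together with the Gauss--Newton identities this yields the top-left $2\times2$ block. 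Finally $\sum_j\lambda_j\alpha_{cj}\alpha_{dj}=(A\Lambda A^T)_{cd}$, so summing over $\nu$ against $\partial\ell/\partial\vz$ turns the $(\vxi_c,\veta_d)$ entry into $(A\Lambda A^T)_{cd}F$ and the $(\veta_c,\veta_d)$ entry into $(A\Lambda A^T)_{cd}G$, i.e.\ $\tilde F=(A\Lambda A^T)\otimes F$ and $\tilde G=(A\Lambda A^T)\otimes G$, while the $\vxi_c$--$\vxi_d$ block is $O$ since both $\partial^2 g/\partial\vxi_c\partial\vxi_d$ and the relevant Gauss--Newton terms vanish. The one real nuisance is the bookkeeping: keeping the $M$-valued output directions and $D$-valued weight directions straight so that the scalars $(A\Lambda A^T)_{cd}$ come cleanly out as Kronecker factors of $F$ and $G$; it is also worth recording that the $\vxi,\veta$ directions decouple from the untouched units $(\vv_i,\vw_i)_{i<H_0}$ because $g$ does not depend on them, and that stationarity of $\thHz_*$ is in fact not needed for this Hessian identity, although it is the case of interest.
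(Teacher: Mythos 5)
Your proposal is correct and follows essentially the same route as the paper's proof: the chain-rule split of each Hessian entry into a Gauss--Newton term (which vanishes in every $\vxi_c,\veta_c$ direction by Lemma~\ref{lma:1st_der}) plus a $\partial\ell/\partial\vz$-weighted second derivative of $\sum_j\vv_j\varphi(\vx;\vw_j)$, evaluated at $\thH_\vlambda$ using $\sum_j\lambda_j=1$ and condition (A2) to collapse the $\va,\vb$ block onto the $L_{H_0}$ Hessian and to produce the $(A\Lambda A^T)$ Kronecker factors. The only differences are organizational (you invoke Lemma~\ref{lma:1st_der} once where the paper re-derives the cancellation block by block), and your remark that stationarity of $\thHz_*$ is not actually needed for the identity is accurate, since the reparameterization \eq{eq:new_param} is affine.
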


Lemma \ref{lma:Hessian} shows that, with the reparametrization, the Hessian at the embedded stationary point $\thH_\vlambda$ contains the Hessian of $L_{H_0}$ with $\va, \vb$, and that the cross blocks between $(\va,\vb)$ and $(\vxi_c,\veta_d)$ are zero.  Note that the $\vxi$-$\vxi$ block is zero, which is important when we prove Theorem \ref{thm:flat}.

\begin{thm}\label{thm:flat}
Consider a three layer neural network given by \eq{eq:3NN}.  Suppose that the dimension of the output, $M$, is greater than 1 and $\thHz_*$ is a minimum of $L_{H_0}$. Let the matrices $G$, $F$ and the parameter $\thH_\vlambda$ be used in the same meaning as in Lemma \ref{lma:Hessian}. Then, if either of the conditions \\
(i) $G$ is positive or negative definite, and $F\neq O$,  \\
(ii) $G$ has positive and negative eigenvalues,  \\
holds, then for any $\vlambda$ with $\sum_{j=H_0}^H \lambda_j=1$ and $\lambda_j\neq 0$, $\thH_\vlambda$ is a saddle point of $L_H$.
\end{thm}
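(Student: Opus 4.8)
The plan is to restrict $L_H$ to the affine subspace through $\thH_\vlambda$ obtained by varying only the coordinates $\vxi_{H_0+1},\dots,\vxi_H,\veta_{H_0+1},\dots,\veta_H$, and to show $\thH_\vlambda$ is a saddle point of this restriction, which suffices: $L_H$ is stationary at $\thH_\vlambda$ by Theorem~\ref{thm:stationary} (a minimizer of $L_{H_0}$ is in particular stationary, and our $\vlambda$ satisfies $\sum_j\lambda_j=1$), so its gradient vanishes there and hence so does the gradient of the restriction; and by Lemma~\ref{lma:Hessian} the Hessian of the restriction at $\thH_\vlambda$ is $N:=\left(\begin{smallmatrix}O&\tilde F\\ \tilde F^T&\tilde G\end{smallmatrix}\right)$ with $\tilde F=(A\Lambda A^T)\otimes F$ and $\tilde G=(A\Lambda A^T)\otimes G$. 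If $N$ is indefinite, then since $L_H$ is $C^2$ ($\ell$ is $C^2$ in $\vf$, $\varphi$ is $C^2$ in $\vw$) a second-order Taylor expansion along a positive, resp.\ negative, eigenvector of $N$ shows $\thH_\vlambda$ is neither a local minimum nor a local maximum of the restriction, hence not a local extremum of $L_H$; being stationary, $\thH_\vlambda$ is then a saddle point. So the task reduces to proving $N$ indefinite.

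The one preliminary I need is that $P:=A\Lambda A^T$ is invertible. By (A1), $A$ has full row rank $H-H_0$; by (A2), its rows lie in the $(H-H_0)$-dimensional subspace $\vlambda^\perp$, so they span it, whence $\ker A=\mathrm{span}(\vlambda)$ and $\mathrm{range}(A^T)=\vlambda^\perp$. If $Pv=\bm{0}$, then $A(\Lambda A^T v)=\bm{0}$ gives $\Lambda A^T v\in\ker A=\mathrm{span}(\vlambda)$, so $A^T v=t\Lambda^{-1}\vlambda$ for a scalar $t$; but $A^T v\in\vlambda^\perp$, so $0=\vlambda^T A^T v=t\,\vlambda^T\Lambda^{-1}\vlambda=t\sum_j\lambda_j=t$, whence $A^T v=\bm{0}$ and, as $A^T$ is injective, $v=\bm{0}$. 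In particular $P\neq O$.

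Now the two cases. Under (ii), $G$ has eigenvalues $\nu_+>0$ and $\nu_-<0$, and $P$, being invertible, has a nonzero eigenvalue $\mu$; since the spectrum of the Kronecker product $\tilde G=P\otimes G$ consists of the pairwise products of eigenvalues of $P$ and $G$, both $\mu\nu_+$ and $\mu\nu_-$ lie in the spectrum of $\tilde G$ and they have opposite signs, so $\tilde G$ is indefinite, and hence so is $N$ (restrict the quadratic form of $N$ to vectors with zero $\vxi$-part, on which it reduces to $\tilde G$). Under (i), $F\neq O$ and $P\neq O$ force $\tilde F=P\otimes F\neq O$; picking indices with $(\tilde F)_{kl}=t\neq0$, the $2\times2$ principal submatrix of $N$ on the $k$-th $\vxi$-coordinate and the $l$-th $\veta$-coordinate has the form $\left(\begin{smallmatrix}0&t\\ t&g\end{smallmatrix}\right)$, of determinant $-t^2<0$, hence has one positive and one negative eigenvalue; since indefiniteness of a principal submatrix propagates to the whole symmetric matrix, $N$ is again indefinite. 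In both cases $N$ is indefinite, so $\thH_\vlambda$ is a saddle point of $L_H$.

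I do not anticipate a genuine obstacle: the structural content is already supplied by Lemmas~\ref{lma:1st_der} and~\ref{lma:Hessian}, and what remains is the invertibility of $A\Lambda A^T$ together with two elementary spectral facts --- the Kronecker spectrum, and the fact that a symmetric $2\times2$ block with zero diagonal and nonzero off-diagonal is indefinite (it is here that the vanishing $\vxi$-$\vxi$ block of $N$, noted right after Lemma~\ref{lma:Hessian}, does the real work). If I had to name a delicate point it would be keeping the ``saddle via restriction'' logic airtight. Incidentally, $M>1$ is what makes condition (i) non-vacuous: differentiating $L_{H_0}$ in $\vu_{H_0}$ at the stationary point $\thHz_*$ yields $F^T\vzeta_{H_0*}=\bm{0}$, so for $M=1$ one would be forced to $F=O$ unless $\vzeta_{H_0*}=\bm{0}$, and the latter makes $G=O$; the hypothesis plays no further role in the argument.
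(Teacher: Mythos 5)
Your proof is correct, and its overall skeleton is the same as the paper's: both reduce the problem to showing that the lower-right block $N=\bigl(\begin{smallmatrix}O&\tilde F\\ \tilde F^T&\tilde G\end{smallmatrix}\bigr)$ from Lemma~\ref{lma:Hessian} is indefinite, and case (ii) is handled identically via the Kronecker spectrum of $\tilde G=(A\Lambda A^T)\otimes G$ together with the invertibility of $A\Lambda A^T$. The difference is in case (i): the paper applies a Schur-complement congruence to turn $N$ into $\operatorname{diag}(-\tilde F^T\tilde G^{-1}\tilde F,\;\tilde G)$, which exhibits eigenvalues of both signs but requires $\tilde G$ to be invertible (this is where the definiteness of $G$ enters); you instead extract a $2\times2$ principal submatrix $\bigl(\begin{smallmatrix}0&t\\ t&g\end{smallmatrix}\bigr)$ with $t\neq 0$, whose determinant $-t^2<0$ forces indefiniteness. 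Your route is more elementary and actually proves slightly more: under (i) you never use the definiteness of $G$, only $F\neq O$ and the vanishing $\vxi$-$\vxi$ block noted after Lemma~\ref{lma:Hessian}, so the hypothesis on $G$ in (i) is redundant for your argument. You also supply a careful proof that $A\Lambda A^T$ is invertible, which the paper merely asserts in this proof (a related computation appears only later, in the proof of Theorem~\ref{thm:localmin_smooth_M1} via Sylvester's law of inertia), and your closing observation on why $M>1$ is needed matches the paper's discussion in Section~\ref{sec:localmin_smooth_M1}.
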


Theorem \ref{thm:flat} is easily proved from Lemma \ref{lma:Hessian}.  From the form of the lower-right four blocks of \eq{eq:Hessian}, it has positive and negative eigenvalues if $\tilde{G}$ is positive (or negative) definite and $\tilde{F}\neq O$.  See Section \ref{sec:proof_flat} in Supplements for a complete proof. 
The assumption $M\geq 2$ is necessary for the condition (i) to happen. In fact, \cite{localmin} discussed the case of $M=1$, in which $F=O$ is derived. The paper also gave a sufficient condition that the embedded point $\thH_\lambda$ is a local minimum when $G$ is positive (or negative) definite.  See Section \ref{sec:localmin_smooth_M1} for more details on the special case of $M=1$. 

Suppose that $\thHz_*$ attains zero training error.  Then, $\thH_\vlambda$ can never be a saddle point but a global minimum. Therefore,  the situation (ii) can never happen. In that case, if $G$ is invertible, it must be positive definite and $F=O$. We will discuss this case further in Section \ref{sec:zero_error}.


\section{Semi-flat minima by embedding of ReLU networks}
\label{sec:min_relu}

This section discusses networks with ReLU. Its special shape causes different results.  Let $\phi(t)$ be the ReLU function: $\phi(t) = \max\{t,0\}$, which is used very often in DNNs to prevent vanishing gradients \cite{NairHinton_ICML2010_ReLU,pmlr-v15-glorot11a}.  
The activation is given by $\varphi(\vx; \vw) = \phi(\vw^T \tilde{\vx})$ with $\vw^T \tilde{\vx}:=\vw_{wgt}^T \vx - w_{bias}$. 
It is important to note that the ReLU function satisfies positive homogeneity; i.e., $\phi(\alpha t) = \alpha\phi(t)$ for any $\alpha\geq 0$.
This causes special properties on $\varphi$, that is,  
(a) $\varphi(\vx;r\vw) = r \varphi(\vx; \vw)$ for any $r\geq  0$, (b) $
\frac{\partial \varphi(\vx;\vw)}{\partial \vw}\Bigl|_{\vw=r\vw_*} = \frac{\partial \varphi(\vx;\vw)}{\partial \vw}\Bigl|_{\vw=\vw_*}$ if $r> 0, \vw^T \tilde{\vx}\neq 0$,
and (c) $\frac{\partial^2 \varphi(\vx; \vw)}{\partial{\vw}\partial{\vw}}= 0$   if $\vw^T \tilde{\vx}\neq 0$. 

From the positive homogeneity, effective parameterization needs some normalization of $\vv_j$ or $\vw_j$.  However, this paper uses the redundant parameterization. In our theoretical arguments, no problem is caused by the redundancy, while it gives additional flat directions in the parameter space.

\subsection{Embeddings of ReLU networks}
\label{sec:inv_relu}
Reflecting the above special properties, we introduce modified versions for embeddings of $\thHz_*$.

{\bf (I)$_R$ Unit replication: }
Fix $\mathcal{U}_{H_0}^q$, and take $\vgamma=(\gamma_{H_0},\ldots,\gamma_H)\in \R^{H-H_0+1}$ and $\vbeta=(\beta_{H_0},\ldots,\beta_{H})$ such that $\beta_j>0$ ($H_0\leq\forall  j\leq H$) and $    \sum_{j=H_0}^{H}\gamma_j\beta_j =1$.
Define $\thH_{\vgamma,\bm{\beta}}$ by 
\begin{align}
    & \vw_i = \vu_i, \quad \vv_i =  \vzeta_i \quad (1\leq i\leq H_0-1),\nonumber \\
    & \vw_{j} = \beta_j \vu_{H_0}, \quad  \vv_j = \gamma_j \vzeta_{H_0}\quad (H_0\leq j\leq H). 
    \label{eq:th_gb}
\end{align}

{\bf (II)$_R$ Inactive units: }
Define a parameter $\hat{\vtheta}^{(H)}$ by 
\begin{align}\label{eq:inactive_unit_relu}
   & \vw_i  = \vu_{i},  \quad \vv_i  = \vzeta_{i} \quad  (1\leq i\leq H_0), \qquad 
       \vv_j:  \text{ arbitrary }\quad (H_0+1\leq j\leq H) \nonumber \\
   &  \vw_j  \;\text{ such that }\; \vw_j^T \tilde{\vx}_\nu < 0 \quad (\forall \nu, H_0+ 1\leq j\leq H).
\end{align}
The last condition is easily satisfied if $w_{bias}$ is large.  Note also that $\varphi(\vx_\nu;\vw_j)= 0$ for each $\nu$, but  $\varphi(\vx;\vw_j)\not\equiv 0$ in general. 
Since a small change of $\vw_j$ ($H_0+1\leq j\leq H$) does not alter $\varphi(\vx_\nu;\vw_j)=0$, the function $L_H$ is constant locally on $\vv_j$ and $\vw_j$ ($H_0+1\leq j\leq H$) at $\hat{\vtheta}^{(H)}$.  This is clear difference from the smooth case, where changing $\vw_j$ from $\vw^{(0)}$ may cause a different function. 

{\bf (III)$_R$ Inactive propagation: }
The inactive propagation is exactly the same as the smooth activation case.  The embedded point is denoted by $\tilde{\vtheta}^{(H)}$.

The following proposition is obvious from the definitions.  
\begin{prop}
For the unit replication and inactive propagation, we have $
\vf^{(H)}_{\thH_{\vgamma,\vbeta}} = \vf^{(H)}_{\tilde{\vtheta}^{(H)}} =  \vf^{(H_0)}_{\thHz_*}$. 
\end{prop}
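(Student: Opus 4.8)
The plan is to verify directly that each of the two embedded parameters produces the same network function as $\vf^{(H_0)}_{\thHz_*}$, simply by unwinding the definitions in \eq{eq:th_gb} and in the inactive‑propagation construction and using the positive homogeneity of ReLU. Since both embeddings keep $W_0$ and $V_0$ (hence $\vpsi$ and the map $\varphi(\,\cdot\,;\vw,W_0)$) untouched, it suffices to check that the pre‑activation into the $(q+1)$‑th layer, namely $\sum_j \vv_j\,\varphi(\vx;\vw_j,W_0)$, coincides with $\sum_{i=1}^{H_0}\vzeta_i\,\varphi(\vx;\vu_i,W_0)$ for every input $\vx$; the outer $\vpsi(\,\cdot\,;V_0)$ then gives equality of the full functions.

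For the inactive propagation $\tilde{\vtheta}^{(H)}$ this is immediate: the first $H_0$ units are copied verbatim ($\vw_i=\vu_i$, $\vv_i=\vzeta_i$), and the surplus units $H_0+1,\dots,H$ have $\vv_j=\bm 0$, so they contribute nothing to the sum regardless of how $\vw_j$ is chosen. Hence $\sum_{j=1}^H \vv_j\varphi(\vx;\vw_j,W_0)=\sum_{i=1}^{H_0}\vzeta_i\varphi(\vx;\vu_i,W_0)$ identically. This is exactly the smooth case already covered by Proposition~\ref{prop:equal_fun}, so one line suffices — indeed one may just cite that proposition since the inactive‑propagation map is unchanged.

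For the unit replication $\thH_{\vgamma,\vbeta}$, the first $H_0-1$ units are again copied verbatim, so their contribution matches the first $H_0-1$ terms of the narrower sum. For the block $j=H_0,\dots,H$ we use property (a), $\varphi(\vx;\beta_j\vu_{H_0},W_0)=\beta_j\,\varphi(\vx;\vu_{H_0},W_0)$, valid since $\beta_j>0$. Therefore
\begin{align*}
\sum_{j=H_0}^{H}\vv_j\,\varphi(\vx;\vw_j,W_0)
&=\sum_{j=H_0}^{H}\gamma_j\vzeta_{H_0}\,\varphi(\vx;\beta_j\vu_{H_0},W_0)\\
&=\Bigl(\textstyle\sum_{j=H_0}^{H}\gamma_j\beta_j\Bigr)\vzeta_{H_0}\,\varphi(\vx;\vu_{H_0},W_0)
=\vzeta_{H_0}\,\varphi(\vx;\vu_{H_0},W_0),
\end{align*}
using the normalization $\sum_{j=H_0}^{H}\gamma_j\beta_j=1$. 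Adding the first $H_0-1$ terms recovers exactly $\sum_{i=1}^{H_0}\vzeta_i\varphi(\vx;\vu_i,W_0)$, and applying $\vpsi(\,\cdot\,;V_0)$ gives $\vf^{(H)}_{\thH_{\vgamma,\vbeta}}=\vf^{(H_0)}_{\thHz_*}$.

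There is essentially no obstacle here; the only point that requires (mild) care is making sure the homogeneity property (a) is invoked with a strictly positive scalar $\beta_j$ — this is precisely why the definition in \eq{eq:th_gb} imposes $\beta_j>0$ rather than merely $\beta_j\neq 0$, since ReLU is only positively homogeneous. Everything else is a bookkeeping check that the untouched layers $W_0,V_0$ and the outer map $\vpsi$ play no role, which is already the content of Proposition~\ref{prop:equal_fun}.
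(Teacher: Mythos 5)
Your proposal is correct and is exactly the verification the paper has in mind: the paper omits a written proof, stating only that the claim is ``obvious from the definitions,'' and the intended argument is precisely your computation --- the surplus units contribute nothing under inactive propagation since $\vv_j=\bm{0}$, and under unit replication the positive homogeneity $\varphi(\vx;\beta_j\vu_{H_0},W_0)=\beta_j\varphi(\vx;\vu_{H_0},W_0)$ together with $\sum_{j}\gamma_j\beta_j=1$ collapses the replicated block to $\vzeta_{H_0}\varphi(\vx;\vu_{H_0},W_0)$. Nothing further is needed.
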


We see that there are some other flat directions in addition to the general cases.  In the embedding by inactive units, if the condition $\vw_j^T\tilde{\vx}_\nu \leq 0$ is maintained, $L_H$ has the same value. Assume $\|\vx_\nu\|\leq 1$ without loss of generality, and fix $K>1$ as a constant. Define $\hat{\vw}_{j,wgt}={\bf 0}$ and $\hat{w}_{j,bias}=2K$ for $H_0+1\leq j\leq H$.  From  $\vw_j^T \tilde{\vx}_\nu \leq \|\vw_{j,wgt}\|-w_{j,bias}\leq 0$ for $\vw_j\in B_K:=\{\vw_j\mid \|\vw_{j,wgt}\|\leq K \text{ and }  K \leq w_{j,bias}\leq 3K\}$ and any $\vv_j$ ($H_0+1\leq j\leq H$), we have the following result, showing that an $(H-H_0)\times (M+D)$ dimensional affine subset at $\hat{\vtheta}^{(H)}$ gives the same value at $\vx_\nu$.
\begin{prop}\label{prop:inv_iu_relu}
Assume $\|\vx_\nu\|\leq 1$ ($\forall \nu$).  If $(\vv_i,\vw_i)=(\vzeta_{i*},\vu_{i*})$ ($1\leq i\leq H_0$) and $(\vv_j,\vw_j)\in \R^M\times B_K$  ($H_0+1\leq j\leq H$), we have for any $\nu=1,\ldots,n$
\[
\vf^{(H)}(\vx_\nu;\thH) = \vf^{(H_0)}(\vx_\nu;\thHz_*).
\]
\end{prop}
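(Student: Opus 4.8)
The plan is to show that, under the stated hypotheses, every surplus unit $\mathcal{U}^q_j$ with $H_0+1\le j\le H$ produces the output $0$ at each training point $\vx_\nu$, so that the expression defining $\vf^{(H)}(\vx_\nu;\thH)$ reduces term by term to the one defining $\vf^{(H_0)}(\vx_\nu;\thHz_*)$. No positive-homogeneity argument is needed here; only the hard threshold of ReLU at $0$ is used, together with the geometry of $B_K$.

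First I would bound the pre-activation of a surplus unit. For $H_0+1\le j\le H$ and $\vw_j=(\vw_{j,wgt},w_{j,bias})\in B_K$, the Cauchy--Schwarz inequality together with $\|\vx_\nu\|\le 1$ gives
\[
\vw_j^T\tilde{\vx}_\nu = \vw_{j,wgt}^T\vx_\nu - w_{j,bias}\le \|\vw_{j,wgt}\|\,\|\vx_\nu\| - w_{j,bias}\le K\cdot 1 - K = 0,
\]
where the last step uses $\|\vw_{j,wgt}\|\le K$ and $w_{j,bias}\ge K$ from the definition of $B_K$; this is the only place where the normalization $\|\vx_\nu\|\le 1$ and the particular shape of $B_K$ are used. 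Since $\phi(t)=\max\{t,0\}$ vanishes for $t\le 0$, it follows that $\varphi(\vx_\nu;\vw_j)=\phi(\vw_j^T\tilde{\vx}_\nu)=0$ for every $\nu$ and every such $j$, for any $\vv_j\in\R^M$. Substituting this together with the hypotheses $(\vv_i,\vw_i)=(\vzeta_{i*},\vu_{i*})$ ($1\le i\le H_0$) into the network definition yields
\[
\vf^{(H)}(\vx_\nu;\thH)=\sum_{i=1}^{H_0}\vzeta_{i*}\varphi(\vx_\nu;\vu_{i*})+\sum_{j=H_0+1}^{H}\vv_j\cdot 0=\vf^{(H_0)}(\vx_\nu;\thHz_*),
\]
which is the claim. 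For a general $J$-layer network one adds the remark that, the surplus units contributing $0$ to the $q$-th layer output at each $\vx_\nu$ and the downstream parameter $V_0$ being untouched, the map $\vpsi(\,\cdot\,;V_0)$ receives the same input at every $\vx_\nu$ and hence returns the same value.

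There is no real obstacle: the result is immediate once the sign of the pre-activation is controlled. The only points that deserve a little care are the bias sign convention --- $\vw^T\tilde{\vx}=\vw_{wgt}^T\vx-w_{bias}$, so a large \emph{positive} $w_{bias}$ is what drives the pre-activation negative, which is precisely why $B_K$ requires $w_{j,bias}\ge K$ --- and the fact that the conclusion holds only \emph{pointwise} on the training set: a perturbation of $\vw_j$ within $B_K$ may well change $\varphi(\,\cdot\,;\vw_j)$ as a function off the data, but this does not affect $L_H$, which sees the network only through the finitely many values $\vf^{(H)}(\vx_1;\cdot),\dots,\vf^{(H)}(\vx_n;\cdot)$.
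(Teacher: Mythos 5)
Your proof is correct and is essentially the same argument the paper gives (inline, just before the proposition): the Cauchy--Schwarz bound $\vw_j^T\tilde{\vx}_\nu\le\|\vw_{j,wgt}\|\|\vx_\nu\|-w_{j,bias}\le K-K=0$ for $\vw_j\in B_K$ kills every surplus ReLU unit at each training point, so the sum collapses to the narrower network's output. Your added remarks on the bias sign convention and the pointwise-on-data nature of the conclusion are accurate but do not change the route.
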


Next, for the unit replication of ReLU networks, 
the piecewise linearity of ReLU causes additional flat directions. To see this,  
for a fixed $(\vgamma,\vbeta)$ with $\sum_j \gamma_j\beta_j=1$, we introduce a parametrization in a similar manner to the smooth case.  
Let $A=(\alpha_{cj})$ be an $(H-H_0)\times (H-H_0+1)$ matrix such that
$\sum_{j=H_0}^H \alpha_{cj} \gamma_j\beta_j = 0$ ($\forall c$)
and $\bigl(\begin{smallmatrix} {\bf 1}_{H-H_0+1}^T  \\ A \end{smallmatrix} \bigr)$ is invertible. 
Fix such $A$ and define $(\va, \vxi_{H_0+1},\ldots,\vxi_H; \vb,\veta_{H_0+1},\ldots,\veta_{H})$ by \eq{eq:new_param}. 
The next proposition shows that a small change of $(\veta_j)_{j=H_0+1}^H$ does not alter the value $L_H(\thH)=L_{H_0}(\thHz_*)$.  
For $\delta>0$, let $B_\delta^\veta(\thH)$ denote the intersection of the ball of radius $\delta$ with center $\thH$ and the affine subspace spanned by $\veta_{H_0+1},\ldots,\veta_H$ at $\thH$.  See Section \ref{sec:proof_inv_relu2} in Supplements for the proof.  
\begin{prop}\label{prop:inv_relu2}
Let $\{\vx_\nu\}_{\nu=1}^n$ be any data set, $\thHz_*$ be any parameter of the ReLU network $\NN_{H_0}$, and $\thH_{\vgamma,\vbeta}$ be defined by \eq{eq:th_gb}.  Assume that $\vu_{H_0*}^T \vx_\nu\neq 0$ for all $\nu$.  Then, there is $\delta>0$ such that
$$
\vf^{(H)}(\vx_\nu;\thH) = \vf^{(H_0)}(\vx_\nu;\thHz_*)
$$
for any $\thH\in B_\delta^\veta(\thH_{\vgamma,\vbeta})$ and $\nu=1,\ldots,n$.
\end{prop}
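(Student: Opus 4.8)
The plan is to reduce everything to a statement about the single replicated block of units $\mathcal U^q_{H_0},\dots,\mathcal U^q_H$, and then exploit that, near $\thH_{\vgamma,\vbeta}$, every training point places all of these units strictly inside one linear piece of the ReLU. Since $\Pi_{repl}$ leaves the units $1,\dots,H_0-1$, the upstream parameters $W_0$ and the downstream parameters $V_0$ untouched, and $\vf^{(H)}(\vx;\thH)=\vpsi\bigl(\sum_{j=1}^H\vv_j\,\varphi(\vx;\vw_j);V_0\bigr)$, it suffices to produce $\delta>0$ such that
\[
\textstyle\sum_{j=H_0}^{H}\vv_j\,\varphi(\vx_\nu;\vw_j)=\vzeta_{H_0*}\,\varphi(\vx_\nu;\vu_{H_0*})\qquad\text{for every }\nu\text{ and every }\thH\in B_\delta^\veta(\thH_{\vgamma,\vbeta});
\]
applying $\vpsi(\,\cdot\,;V_0)$ after restoring the unchanged contribution of units $1,\dots,H_0-1$ then gives $\vf^{(H)}(\vx_\nu;\thH)=\vf^{(H_0)}(\vx_\nu;\thHz_*)$. (Here I write $\varphi(\vx_\nu;\vw)=\phi(\vw^{T}\tilde\vx_\nu)$ with $\tilde\vx_\nu$ the fixed, bias-augmented input to the $q$-th layer, so the argument covers general $J$ verbatim.)

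First I would record what stays fixed along $B_\delta^\veta(\thH_{\vgamma,\vbeta})$. In the reparametrization introduced just before the statement, $\thH_{\vgamma,\vbeta}$ is the point $\vxi_c=\veta_c={\bf 0}$, $\va=\vzeta_{H_0*}$, $\vb=\vu_{H_0*}$; moving along the $\veta$-subspace keeps $\vxi_c={\bf 0}$, hence freezes the weights $\vv_j=\gamma_j\vzeta_{H_0*}$ ($H_0\le j\le H$), while $\vw_j$ ranges over $\beta_j\bigl(\vu_{H_0*}+\sum_{c}\alpha_{cj}\veta_c\bigr)$. Therefore
\[
\textstyle\sum_{j=H_0}^{H}\gamma_j\vw_j=\vu_{H_0*}+\sum_{c}\bigl(\sum_{j}\gamma_j\beta_j\alpha_{cj}\bigr)\veta_c=\vu_{H_0*},
\]
the last equality being exactly the defining identity $\sum_{j}\alpha_{cj}\gamma_j\beta_j=0$ of $A$; consequently $\sum_{j=H_0}^{H}\vv_j\vw_j^{T}=\vzeta_{H_0*}\bigl(\sum_{j}\gamma_j\vw_j\bigr)^{T}=\vzeta_{H_0*}\vu_{H_0*}^{T}$ is the same $M\times D$ matrix for every $\thH\in B_\delta^\veta(\thH_{\vgamma,\vbeta})$.

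Next I would freeze the ReLU activation pattern. At $\thH_{\vgamma,\vbeta}$ the pre-activation of the $j$-th unit on $\vx_\nu$ is $\vw_j^{T}\tilde\vx_\nu=\beta_j\bigl(\vu_{H_0*}^{T}\tilde\vx_\nu\bigr)$, which is nonzero by hypothesis and, since $\beta_j>0$, has the same sign $\sigma_\nu:=\operatorname{sgn}(\vu_{H_0*}^{T}\tilde\vx_\nu)$ for all $j$. As $\thH\mapsto\vw_j^{T}\tilde\vx_\nu$ is continuous and there are only finitely many pairs $(j,\nu)$, I can pick $\delta>0$ so small that $\operatorname{sgn}(\vw_j^{T}\tilde\vx_\nu)=\sigma_\nu$ for all $j,\nu$ and all $\thH\in B_\delta^\veta(\thH_{\vgamma,\vbeta})$. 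Then, for such a $\thH$ and any $\nu$: if $\sigma_\nu=+1$, every unit in the block and the unit $\vu_{H_0*}$ are in their linear regime, so $\sum_{j=H_0}^H\vv_j\,\varphi(\vx_\nu;\vw_j)=\bigl(\sum_j\vv_j\vw_j^{T}\bigr)\tilde\vx_\nu=\vzeta_{H_0*}\vu_{H_0*}^{T}\tilde\vx_\nu=\vzeta_{H_0*}\,\varphi(\vx_\nu;\vu_{H_0*})$ by the invariant above; if $\sigma_\nu=-1$, then $\varphi(\vx_\nu;\vw_j)=0$ for every $j$ in the block and $\varphi(\vx_\nu;\vu_{H_0*})=0$, so both sides of the block identity vanish. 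In either case the block identity holds, which proves the proposition.

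The step I expect to be the crux — though it is bookkeeping rather than anything deep — is pinning down that the $\veta$-perturbation leaves $\sum_{j=H_0}^{H}\gamma_j\vw_j$ (equivalently $\sum_j\vv_j\vw_j^{T}$) exactly invariant: this is precisely what the constraint $\sum_j\alpha_{cj}\gamma_j\beta_j=0$ on $A$ is engineered to enforce. The only other care needed is the uniform-in-$(j,\nu)$ choice of $\delta$, which rests on the hypothesis $\vu_{H_0*}^{T}\tilde\vx_\nu\neq0$ (no datum sits on the kink of $\mathcal U^q_{H_0}$) together with $\beta_j>0$. By contrast, the analogous statement fails for smooth activation, where Lemma~\ref{lma:1st_der} only yields vanishing first derivatives along $\veta_c$ rather than an exact flat subset; isolating this contrast is the point of the proposition.
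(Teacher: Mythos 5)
Your proof is correct and follows essentially the same route as the paper's: fix the ReLU activation pattern for small $\veta$ using $\vu_{H_0*}^T\vx_\nu\neq 0$ and $\beta_j>0$, then in the active regime cancel the $\veta$-contributions via $\sum_j\gamma_j\beta_j=1$ and $\sum_j\alpha_{cj}\gamma_j\beta_j=0$, and in the inactive regime observe both sides vanish. Packaging the cancellation as the invariance of $\sum_j\vv_j\vw_j^T$ is a tidy but equivalent reformulation of the paper's term-by-term expansion.
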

The assumption $\vu_{H_0*}^T\vx_\nu\neq 0$ may easily happen in practice. (See Figure \ref{fig:gen_error}(a), for example.)


\subsection{Embedding a local minimum of ReLU networks}
\label{sec:localmin_relu}

We first consider the embedding of a minimum by inactive units. 
Let $\hat{\vtheta}^{(H)}$ be an embedding of $\thHz$ by \eq{eq:inactive_unit_relu}.  From Proposition \ref{prop:inv_iu_relu}, $L_H(\thH)$ does not depend on  $(\vv_j,\vw_j)_{j=H_0+1}^H$ around $\hat{\vtheta}^{(H)}$ but takes the same value as $L_{H_0}(\thHz)$ with $\thHz=(\vv_i,\vw_i)_{i=1}^{H_0}$.  We have thus the following theorem.  
\begin{thm}\label{thm:relu_flatmin}
Assume that $\thHz_*$ is a minimum of $L_{H_0}$. Then, the embedded point $\hat{\vtheta}^{(H)}$ defined by \eq{eq:inactive_unit_relu} is a minimum of $L_H$.  
\end{thm}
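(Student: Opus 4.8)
The plan is to show directly that $\hat{\vtheta}^{(H)}$ is a local minimum of $L_H$ by reducing $L_H$ on a small neighborhood of $\hat{\vtheta}^{(H)}$ to $L_{H_0}$ on a neighborhood of $\thHz_*$. For a parameter $\thH$ of $\NN_H$ near $\hat{\vtheta}^{(H)}$, denote by $\hat{\vw}_j$ ($H_0+1\le j\le H$) the surplus weight vectors of $\hat{\vtheta}^{(H)}$, and let $\vartheta$ be the $\NN_{H_0}$-parameter obtained from $\thH$ by dropping the surplus units $j=H_0+1,\dots,H$ while keeping every other component (in particular those of the adjacent layers); at $\thH=\hat{\vtheta}^{(H)}$ one has $\vartheta=\thHz_*$, and $\vartheta$ depends continuously on $\thH$ since it is a coordinate projection.

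First I would exploit the \emph{strict} sign condition built into the embedding \eqref{eq:inactive_unit_relu}: by construction $\hat{\vw}_j^{\,T}\tilde{\vx}_\nu<0$ for every training index $\nu$ and every surplus $j$. Since $\vw\mapsto\vw^{\,T}\tilde{\vx}_\nu$ is continuous and there are only finitely many $\nu$, there is $\eps_1>0$ such that $\vw_j^{\,T}\tilde{\vx}_\nu<0$ for all $\nu$ whenever $\|\vw_j-\hat{\vw}_j\|<\eps_1$, and hence $\varphi(\vx_\nu;\vw_j)=\phi(\vw_j^{\,T}\tilde{\vx}_\nu)=0$ for such $\vw_j$. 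This is precisely the feature of ReLU that is absent in the smooth case: the surplus units stay inactive on the data under an open set of perturbations of their weights, not only at the isolated weight $\vw^{(0)}$. Consequently, for any $\thH$ with $\|\vw_j-\hat{\vw}_j\|<\eps_1$ ($H_0+1\le j\le H$), the surplus units contribute nothing to the output at the training inputs, so $\vf^{(H)}(\vx_\nu;\thH)=\vf^{(H_0)}(\vx_\nu;\vartheta)$ for every $\nu$, and therefore $L_H(\thH)=L_{H_0}(\vartheta)$; taking $\thH=\hat{\vtheta}^{(H)}$ gives $L_H(\hat{\vtheta}^{(H)})=L_{H_0}(\thHz_*)$ (this is also the content of Proposition~\ref{prop:inv_iu_relu}). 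Now use that $\thHz_*$ is a minimum of $L_{H_0}$: there is $\eps_2>0$ with $L_{H_0}(\vartheta)\ge L_{H_0}(\thHz_*)$ whenever $\|\vartheta-\thHz_*\|<\eps_2$. For $\thH$ close enough to $\hat{\vtheta}^{(H)}$ that simultaneously $\|\vw_j-\hat{\vw}_j\|<\eps_1$ for all surplus $j$ and $\|\vartheta-\thHz_*\|<\eps_2$, we then obtain
\[
L_H(\thH)=L_{H_0}(\vartheta)\ge L_{H_0}(\thHz_*)=L_H(\hat{\vtheta}^{(H)}),
\]
which proves the theorem.

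The only step needing care — and it is the genuine crux rather than a routine technicality — is the existence of $\eps_1$, i.e.\ the stability of inactivity under perturbation; this rests entirely on the strict inequality $\hat{\vw}_j^{\,T}\tilde{\vx}_\nu<0$ in \eqref{eq:inactive_unit_relu} and on there being finitely many data points, so that finitely many open conditions are intersected. Everything else is continuity of coordinate projections together with the defining property of a local minimum of $L_{H_0}$; no differentiability of $\varphi$, and no Hessian analysis, is required, in sharp contrast with the smooth-activation arguments of Section~\ref{sec:embed_smooth}.
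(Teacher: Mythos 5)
Your proof is correct and follows essentially the same route as the paper: the paper also deduces the theorem directly from Proposition~\ref{prop:inv_iu_relu} together with the observation that, by the strict inequality $\vw_j^T\tilde{\vx}_\nu<0$ and the finiteness of the data, the surplus ReLU units remain inactive on all training inputs under small perturbations, so that $L_H$ locally coincides with $L_{H_0}$ of the projected parameter. Your write-up merely makes the $\eps_1$ stability step explicit, which the paper states informally just after introducing the embedding (II)$_R$.
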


Theorem \ref{thm:relu_flatmin} and Proposition \ref{prop:inv_iu_relu} imply that there is an $(H-H_0)\times (M+D)$ dimensional affine subset that gives local minima, and in those directions $L_H$ is flat.  

Next, we consider the embedding by unit replication, which needs further restriction on $\vgamma$ and $\vbeta$.  Let $\vtheta^{(H_0)}$ be a parameter of $\NN_{H_0}$, and $\vgamma=(\gamma_j)_{j=H_0}^H$ satisfy $\sum_{j=H_0}^H \gamma_j > 0$.  Define $\thH_\vgamma$ by replacing $\vw_j=\beta_j \vu_j$ in \eq{eq:th_gb} with $\vw_j =  \vu_{H_0}/\sum_{k=H_0}^H\gamma_k$  ($H_0\leq j\leq H$). If we assume $\vu_{H_0*}^T\vx_\nu\neq 0$ ($\forall\nu$), the function $L_H$ is differentiable on $\veta_c, \vxi_c$, and 
%
for the same reason as Theorem \ref{thm:flat}, the derivatives are zero. By restricting the function on those directions around $\thH_\vgamma$,  from the fact $\frac{\partial^2 \varphi(\vx_\nu;\vu_{H_0})}{\partial\vu_{H_0}\partial\vu_{H_0}}=0$, we can see that the Hessian has the form $\bigl(\begin{smallmatrix} O & \tilde{F}\\ \tilde{F}^T & O\end{smallmatrix}\bigr)$, which includes a positive and negative eigenvalue unless $F=O$.  This derives the following theorem. (See Section \ref{sec:proof_saddle_relu} for a complete proof.) 
\begin{thm}\label{thm:relu_saddles}
Suppose that $\vtheta^{(H_0)}_*$ is a minimum point of $L_{H_0}$.  Assume that $\vu_{H_0*}^T \vx_\nu\neq 0$ for any $\nu=1,\ldots,n$, and that $F\neq O$ where $F$ is given by Lemma \ref{lma:Hessian}.  Then, for any $\vgamma\in\R^{H-H_0+1}$ such that $\sum_{j=H_0}^H \gamma_j > 0$, the embedded parameter $\thH_\vgamma$ is a saddle point of $L_H$.
\end{thm}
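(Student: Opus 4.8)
The plan is to show that $\thH_\vgamma$ is a stationary point of $L_H$ at which the Hessian of $L_H$ restricted to a suitable two-dimensional subspace is indefinite; since $\vf^{(H)}_{\thH_\vgamma}=\vf^{(H_0)}_{\thHz_*}$ by positive homogeneity (property~(a)), we have $L_H(\thH_\vgamma)=L_{H_0}(\thHz_*)$, so a critical point with both a strict ascent and a strict descent direction is neither a local minimum nor a local maximum, i.e.\ a saddle point. Following the reparametrization set up before the theorem, I would pass to coordinates $(\va,\vxi_{H_0+1},\ldots,\vxi_H;\vb,\veta_{H_0+1},\ldots,\veta_H)$ as in \eq{eq:new_param} with $\vb=\vu_{H_0*}/\!\sum_k\gamma_k$, $\va=\vzeta_{H_0*}$, $\lambda_j=\gamma_j$, and $A=(\alpha_{cj})$ satisfying $\sum_j\alpha_{cj}\gamma_j=0$ and the invertibility condition; at $\vxi=\veta=0$ this reproduces $\thH_\vgamma$, and it is a bona fide change of variables whenever all $\gamma_j\neq0$.

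For stationarity: in the $\va$- and $\vb$-directions the ReLU analogue of Lemma~\ref{lma:1st_der} (property~(b) applies since $\varphi(\vx_\nu;\cdot)$ is differentiable near $\vu_{H_0*}$ by the hypothesis $\vu_{H_0*}^T\vx_\nu\neq0$) shows $\partial\vf^{(H)}/\partial\va$ and $\partial\vf^{(H)}/\partial\vb$ at $\thH_\vgamma$ are nonzero multiples of $\partial\vf^{(H_0)}/\partial\vzeta_{H_0}$ and $\partial\vf^{(H_0)}/\partial\vu_{H_0}$, so the corresponding derivatives of $L_H$ vanish; in the $\vxi_c$- and $\veta_c$-directions $\vf^{(H)}$ is locally constant, by Proposition~\ref{prop:inv_relu2} for $\veta_c$ and, for $\vxi_c$, because at $\veta=0$ one has $\sum_j\vv_j\varphi(\vx;\vb)=\bigl((\sum_j\gamma_j)\va+\sum_c(\sum_j\gamma_j\alpha_{cj})\vxi_c\bigr)\varphi(\vx;\vb)$ with $\sum_j\gamma_j\alpha_{cj}=0$; the remaining parameters are handled exactly as in Theorem~\ref{thm:stationary}. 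For the Hessian, property~(c) forces the matrix $G$ of Lemma~\ref{lma:Hessian}, hence the $\veta$-$\veta$ block $(A\Lambda A^T)\otimes G$, to vanish, while the $\vxi$-$\vxi$ block is $O$ and the blocks coupling $(\va,\vb)$ with $(\vxi,\veta)$ are $O$, exactly as in that lemma. Thus the Hessian of $L_H$ at $\thH_\vgamma$ restricted to the $(\vxi_c,\veta_c)$-subspace is $\bigl(\begin{smallmatrix}O&\tilde F\\\tilde F^T&O\end{smallmatrix}\bigr)$ with $\tilde F=(A\Lambda A^T)\otimes F$ and $\Lambda=\mathrm{Diag}(\gamma_{H_0},\ldots,\gamma_H)$, which has a strictly positive and a strictly negative eigenvalue as soon as $\tilde F\neq O$; since $F\neq O$ by hypothesis and $A\Lambda A^T\neq O$ (which, when all $\gamma_j\neq0$, holds for every admissible $A$ — one verifies this using $H\geq H_0+1$ and $\sum_k\gamma_k>0$), we get $\tilde F\neq O$, and the saddle claim follows as in Theorem~\ref{thm:flat}.

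The main obstacle is that the reparametrization above is a valid change of variables only when all $\gamma_j\neq0$, whereas the theorem permits $\gamma_j=0$; for such $\vgamma$ I would produce the two opposite curvature directions directly. Relabel the mutually identical replicated units so that $\gamma_{H_0}\neq0$, and pick a second replicated index $j$ with $\gamma_{H_0}+\gamma_j\neq0$ — possible because $H\geq H_0+1$ and $\sum_k\gamma_k>0$ preclude $\gamma_{H_0+1}=\cdots=\gamma_H=-\gamma_{H_0}$. For $\bm{c}\in\R^M$, $\bm{d}\in\R^D$ and small $\eps$, perturb only units $H_0$ and $j$,
\[
\vv_{H_0}\mapsto\gamma_{H_0}\vzeta_{H_0*}-\eps\bm{c},\quad \vw_{H_0}\mapsto\tfrac{\vu_{H_0*}}{\sum_k\gamma_k}-\tfrac{\gamma_j}{\gamma_{H_0}}\eps\bm{d},\quad \vv_j\mapsto\gamma_j\vzeta_{H_0*}+\eps\bm{c},\quad \vw_j\mapsto\tfrac{\vu_{H_0*}}{\sum_k\gamma_k}+\eps\bm{d},
\]
leaving all other parameters at their $\thH_\vgamma$-values. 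Using that $\varphi(\vx_\nu;\cdot)$ is affine near $\vu_{H_0*}/\!\sum_k\gamma_k$ (again by $\vu_{H_0*}^T\vx_\nu\neq0$) together with positive homogeneity, all $O(\eps)$ contributions cancel and $\vf^{(H)}(\vx_\nu;\cdot)$ changes by exactly $\kappa\eps^2\,\bm{c}\,(\bm{d}^T\psi_\nu)$, where $\psi_\nu=\partial\varphi(\vx_\nu;\vu_{H_0*})/\partial\vu_{H_0}$ and $\kappa=(\gamma_{H_0}+\gamma_j)/\gamma_{H_0}\neq0$; hence $L_H$ changes by $\kappa\eps^2\,\bm{c}^T F\bm{d}+O(\eps^4)$. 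Choosing $\bm{c},\bm{d}$ with $\bm{c}^T F\bm{d}\neq0$ and using $(\bm{c},\bm{d})$ resp.\ $(-\bm{c},\bm{d})$ gives directions of strict increase resp.\ strict decrease of $L_H$ arbitrarily near $\thH_\vgamma$ (while $\bm{d}=0$ is a flat direction), so together with the stationarity above $\thH_\vgamma$ is a saddle point. This perturbation in fact works uniformly in $\vgamma$ and could replace the Hessian computation altogether; the only delicate step is the bookkeeping of the first-order cancellation.
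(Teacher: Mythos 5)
Your proposal is correct, and it is really two proofs in one. The first half is essentially the paper's own argument: the same reparametrization with $A\vgamma=0$, stationarity along $\va,\vb$ via positive homogeneity and along $\vxi_c,\veta_c$ via $A\vgamma=0$ and Proposition \ref{prop:inv_relu2}, vanishing of the $\veta$--$\veta$ block because $\partial^2\varphi/\partial\vw\partial\vw=0$ off the kink, and the conclusion that the restricted Hessian $\bigl(\begin{smallmatrix}O&\tilde{F}\\ \tilde{F}^T&O\end{smallmatrix}\bigr)$ has eigenvalues $\pm\delta_i$ with $\delta_i$ the singular values of $\tilde{F}$. Your second, direct two-unit perturbation is a genuinely different route, and it buys something real: you are right that the reparametrization degenerates when some $\gamma_j=0$, and the issue is sharper than a mere change-of-variables quibble. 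Since $\tilde{A}\Lambda\tilde{A}^T$ is block-diagonal (because $A\vlambda=0$) and congruent to $\Lambda$, one gets $\mathrm{rank}(A\Lambda A^T)=\mathrm{rank}(\Lambda)-1$; hence $A\Lambda A^T=O$, and so $\tilde{F}=O$, exactly when only one $\gamma_j$ is nonzero, and in that case the paper's proof, read literally, produces a zero restricted Hessian and no saddle conclusion, even though the point is still a saddle. Your explicit perturbation covers this case (take $j$ with $\gamma_j=0$, so $\kappa=1$) and indeed all admissible $\vgamma$; I checked the bookkeeping: the $O(\eps)$ terms cancel exactly, $\Delta\vf^{(H)}(\vx_\nu)$ is exactly $\kappa\eps^2\bm{c}(\bm{d}^T\psi_\nu)$ on the active samples because $\varphi(\vx_\nu;\cdot)$ is affine near $\vu_{H_0*}/\sum_k\gamma_k$, and therefore $\Delta L_H=\kappa\eps^2\,\bm{c}^TF\bm{d}+O(\eps^4)$. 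Two minor points: your parenthetical justification for the existence of a pair with $\gamma_i\neq0$ and $\gamma_i+\gamma_j\neq0$ is stated too loosely (the configuration $\gamma_{H_0+1}=\cdots=\gamma_H=-\gamma_{H_0}$ is not excluded when $H-H_0\geq2$ and $\gamma_{H_0}<0$), but the relabeling you allow does rescue it --- one shows easily that if no such pair existed then either $\sum_k\gamma_k=0$ or all $\gamma_k=0$; and for the saddle conclusion you only need the restricted Hessian on the $(\vxi,\veta)$ subspace to be indefinite at a stationary point, so your extra claim that the blocks coupling $(\va,\vb)$ with $(\vxi,\veta)$ vanish is not needed (which is just as well, since the paper does not verify it in the ReLU case either). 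As you note, the direct perturbation could replace the Hessian computation altogether, and it is the version of the argument that actually proves the theorem as stated.
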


\section{Discussions}

\subsection{Minimum of zero error}
\label{sec:zero_error}

In using a very large network with more parameters than the data size, the training error may reach zero.  Assume $\ell(\vy,\vz)\geq 0$ and that a narrower model attains $L_{H_0}(\thHz_*)=0$ without redundant units, i.e., any deletion of a unit will increase the training error.  We investigate overparameterized realization of such a global minimum by embedding in a wider network $\NN_H$.  Note that by any methods the embedded parameter is a minimum.  This causes special local properties on the embedded point. 

For simplicity, we assume three-layer networks and $\|\vx_\nu\|\leq 1$ ($\forall \nu$).  First, consider the unit replication for the smooth activation.  As discussed in the last part of Section \ref{sec:embed_min}, the Hessian takes the form
\begin{equation}\label{eq:Hess_smooth_0}
\text{Smooth: }\quad  \nabla^2 L_H(\thH_\vlambda)=\kbordermatrix{
& \thHz & \veta_c & \vxi_c \\
& \nabla^2 L_{H_0}(\thHz_*)  &  O  & O \\ 
& O   &  O &   O \\ 
&  O   &  O  & \tilde{G}
},
\end{equation}
where $\tilde{G}$ is non-negative definite. It is not difficult to see (Section \ref{sec:Hess_iu}) that, in the case of inactive units, the lower-right four blocks take the form $\bigl( \begin{smallmatrix} O & O \\ O & S\end{smallmatrix}\bigr)$.  The case of inactive propagation is similar. 


For ReLU activation, assume $\thHz_*$ is a differentiable point of $L_{H_0}$ for simplicity.  From Proposition \ref{prop:inv_iu_relu}, the Hessian at the embedding $\hat{\vtheta}^{(H)}$ by inactive units is given by
\begin{equation}\label{eq:Hess_relu_0}
\text{ReLU: }\quad \nabla^2 L_H(\hat{\vtheta}^{(H)})=\kbordermatrix{
& \thHz & (\vv_j, \vw_j) \\
& \nabla^2 L_{H_0}(\thHz_*) &  O \\ 
& O  &    O 
}.
\end{equation}
By a similar argument to the smooth case, the Hessian for the unit replication $\thH_\gamma$ takes the same form as \eq{eq:Hess_relu_0}. 

\subsection{Generalization error bounds of embedded networks}
\label{sec:gen_error}
Based on the results in Section \ref{sec:zero_error}, here we compare the embedding between  ReLU and smooth activation.  The results suggest that the ReLU networks can have an advantage in generalization error when zero training error is realized by some type of overparameterized models. 

Suppose that the smooth model $\NN_{H_{0,s}}$ and ReLU mdoel $\NN_{H_{0,r}}$  attain zero training error without redundant units.  They are embedded by the method of inactive units into $\NN_{H_s}$ and $\NN_{H_r}$, respectively, so that $H_s-H_{0,s}=H_r-H_{0,r}(=:E)$ (the same number of surplus units).  The dimensionality of the parameters of $\NN_{H_{0,s}}$ and $\NN_{H_{0,r}}$ are denoted by $d_{sm}^0$ and $d_{rl}^0$, respectively. 

The major difference of the local properties in Eqs.~(\ref{eq:Hess_smooth_0}) and (\ref{eq:Hess_relu_0}) is the existence of matrix $S$ or $\tilde{G}$ in the smooth case.  The ReLU network has a flat error surface $L_H$ in both the directions of $\vw_j$ and $\vv_j$.  In this sense, the embedded minimum is {\em flatter} in the ReLU network. 
We relate this difference of semi-flatness  to the generalization ability of the networks through the PAC-Bayes bounds.  We give a summary here and defer the details in Section \ref{sec:pac-bayes}, Supplements.

Let $\mathcal{D}$ be a probability distribution of $(\vx,\vy)$ and $\mathcal{L}_H(\thH):=E_\mathcal{D}[\ell(\vy,\vf(\vx;\thH))]$ be the generalization error (or risk).  Training data $(\vx_1,\vy_1),\ldots,(\vx_n,\vy_n)$ are i.i.d.~sample with distribution $\mathcal{D}$.  Then, with a trained parameter $\hat{\vtheta}$, the PAC-Bayes bound tells 
\begin{equation}
     \mathcal{L}_H(\hat{\vtheta}) \lessapprox \frac{1}{n}L_H(\hat{\vtheta}) + 2\sqrt{\frac{2 ( KL(Q||P) + \ln\frac{2\delta}{n})}{n-1}},
\end{equation}
where $P$ is a prior distribution which does {\em not} depend on the training data, and $Q$ is any distribution such that it distributes on parameters that do not change the value of  $L_H$ so much from $L_H(\hat{\vtheta})$.  

We focus on the embedding by inactive units here. 
See Section \ref{sec:Hess_others}, Supplements, for the other cases. 
The essential factor of the PAC-Bayes bound is the KL-divergence $KL(Q||P)$, which is to be as small as possible. We use different choices of $P$ and $Q$ for the smooth and ReLU networks (see Section \ref{sec:pac-bayes} for details).  For the smooth networks, $P_{sm}$ is a non-informative normal distribution $N(0,\sigma^2 I_{d_{sm}})$ with $\sigma\gg 1$, and 
$Q_{sm}$ is $N(\hat{\vtheta}^{(H)}_{sm,0}, \tau^2 \mathcal{H}_{sm}^{-1})\times N(\hat{\vtheta}^{(H)}_{sm,1}, \sigma^2 I_{d^1})\times N(\hat{\vtheta}^{(H)}_{sm,2}, \tau^2 S^{-1})$ with $\tau\ll 1$, where the decomposition corresponds to the components $\thHz$,  $(\vv_j)_{j=H_0+1}^H$, and $(\vw_j)_{j=H_0+1}^H$.  $\mathcal{H}_{sm}:=\nabla^2 L_{H_0}({\thHz_{*,sm}})$ is the Hessian.  
For the ReLU networks, based on Proposition \ref{prop:inv_iu_relu}, $P_{rl}$ is given by $N(0,\sigma^2 I_{d^0_{rl}})\times N(0,\sigma^2 I_{d^1})\times \text{Unif}_{B_K^{E}}$, while $Q_{rl}$ is
$N(\hat{\vtheta}_{rl,0}^{(H)},\tau^2 \mathcal{H}^{-1}_{rl})\times N(\hat{\vtheta}_{rl,1}^{(H)},\sigma^2 I_{d^1})\times \text{Unif}_{B_K^{E}}$, 
where $d^{1}=E\times M$ is the dimensionality of $(\vv_j)_{j=H_0+1}^H$.  
For these choices, 
the major difference of the bounds is the term  
\[
    d^{1} \log \bigl(\sigma^2/\tau^2)
\]
in the KL divergence for the smooth model. We can argue that, in realizing perfect fitting to training data with an overparameterized network, the ReLU network achieves a better upper bound of generalization than the smooth network, when the numbers of surplus units are the same.  




{\bf Numerical experiments.} 
We made experiments on the generalization errors of networks with ReLU and $\tanh$ in overparameterization. The input and output dimension is $1$. Training data of size 10 are given by $\NN_1$ (one hidden unit) for the respective models with additive noise $\varepsilon\sim N(0,10^{-2})$ in the output.   We first trained three-layer networks with each activation to achieve zero training error ($<10^{-29}$ in squared errors) with minimum number of hidden units ($H_0=5$ in both models).  See Figure \ref{fig:gen_error}(a) for an example of fitting by the ReLU network.  We used the method of inactive units for embedding to $\NN_H$, and perturb the whole parameters with $N(0,\rho^2)$, where $\rho$ is the $0.01\times \|\thHz_*\|$.  The code is available in Supplements. Figure \ref{fig:gen_error}(b) shows the ratio of the generalization errors (average and standard error for 1000 trials) of $\NN_H$ over $\NN_{H_0}$ as increasing $H$.  We can see that, as more surplus units are added, the generalization errors increase for the $\tanh$ networks, while  the ReLU networks do not show such increase. This accords with the theoretical considerations in Section  \ref{sec:gen_error}. 

\begin{figure}
    \centering
    \includegraphics[width=4.8cm]{./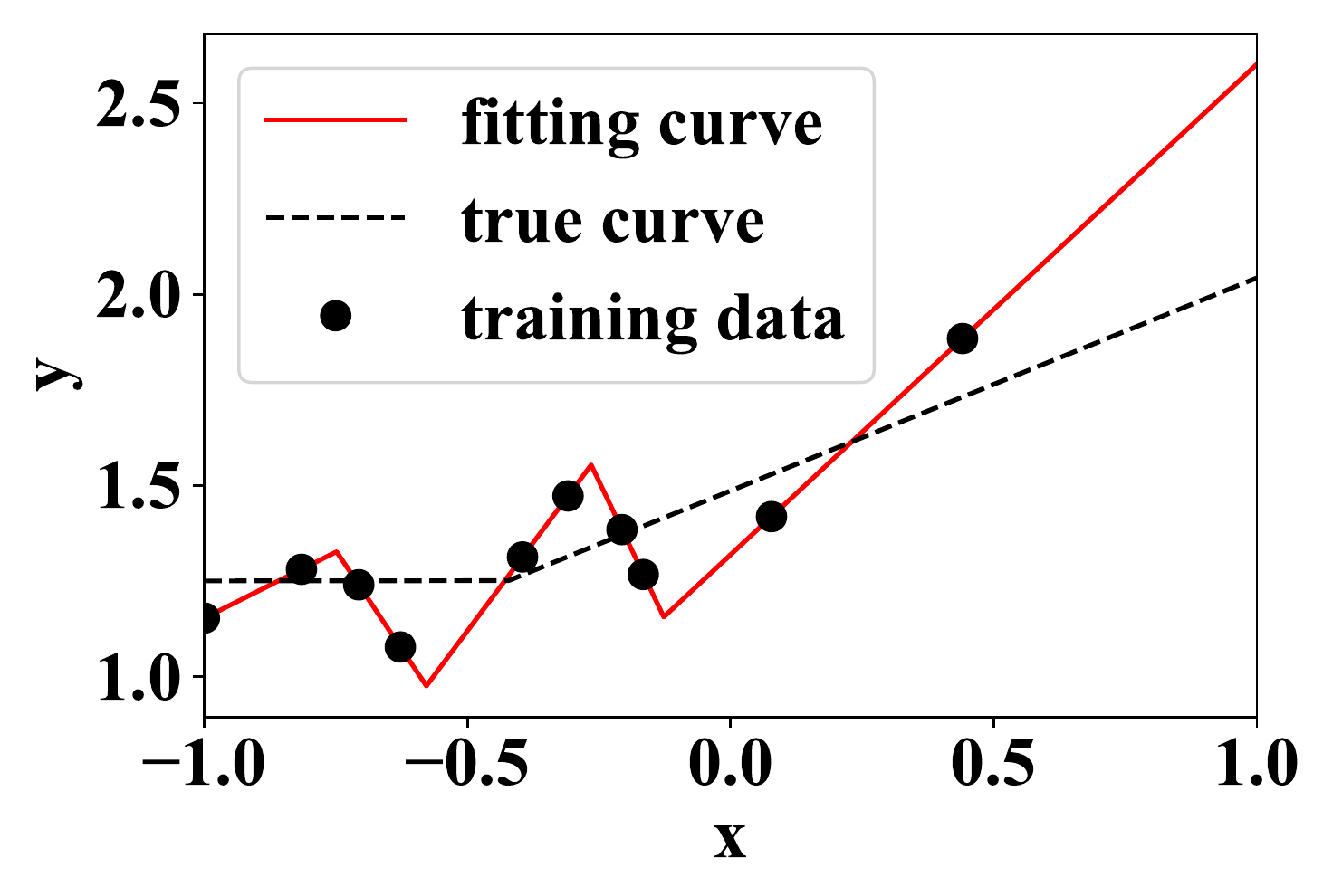}\hspace*{6mm}
    \includegraphics[width=4.8cm]{./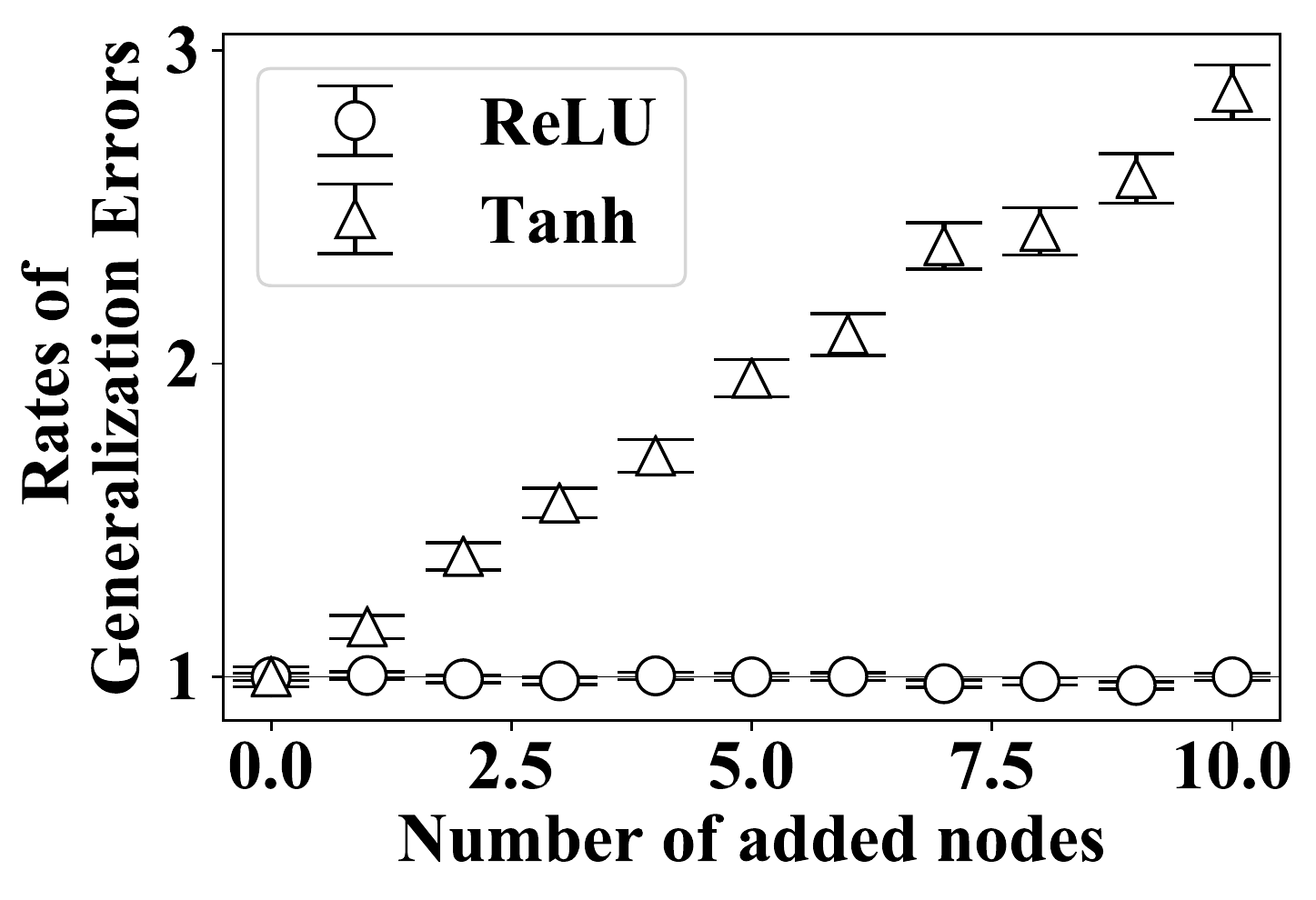}\\
    \vskip -2mm
    \hspace{3mm}(a) \hspace*{5cm} (b)
    \vspace*{-1mm}
    \caption{(a) Data and fitting by $\NN_5$ with ReLU. (b) Ratio of generalization errors of $\NN_H$ and $\NN_{H_0}$.}
    \vspace*{-2mm}
    \label{fig:gen_error}
\end{figure}

\subsection{Additional remarks}

{\bf Regularization.}  In training of a large network, one often regularizes parameters based on the norm such as $\ell_2$ or $\ell_1$.  Consider, for example, the inactive method of embedding for $\tanh$ or ReLU by setting $\vv_j = 0$ and $\vw_j=0$ ($H_0+1\leq j \leq H$).  Then the norm of the embedded parameter is smaller than that of unit replication.  This implies that if norm regularization is applied during training, the embedding by inactive units and propagation is to be promoted in overparameterized realization.  

{\bf Abundance of semi-flat minima in ReLU networks.} 
Theorems \ref{thm:relu_flatmin} and \ref{thm:relu_saddles} discuss three layer models for simplicity, but they can be easily extended to networks of any number of layers.  Given a minimum of $L_{H_0}$, it can be embedded to a wider network by making inactive units in any layers.
Thus, in a very large (deep and wide) network with overparameterization, there are many affine subsets of parameters to realize the same function, which consist of semi-flat minima of the training error.

\section{Conclusions}

For a better theoretical understanding of the error landscape, this paper has discussed three methods for embedding a network to a wider model, and studied overparameterized realization of a function and its local properties. From the difference of the properties between smooth and ReLU networks, our results suggest that ReLU may have an advantage in realizing zero errors with better generalization.  The current analysis reveals some nontrivial geometry of the error landscape, and its implications to dynamics of learning will be within important future works.

\bibliographystyle{abbrvnat}
\bibliography{fukumizu}


\clearpage

\appendix

\begin{center}
{\LARGE Supplements to \\``Semi-flat minima and saddle points by embedding neural networks to overparameterization''}
\end{center}

\section{Proof of Theorem \ref{thm:stationary}}


We show a proof using the original parameterization.  We can also use the repameteriation introduced in Section \ref{sec:embed_min}, which may give other insights on the local properties, but we omit it here. See also Figure \ref{fig:DNN} for the meaning of parameters. 

Recall that the gradients of $L_H$ with respect to the parameters can be given by the {\em back-propagation}, which computes the derivatives with respect to the weight parameters successively from the output layer to the input. 
 For simplicity we use the notation
 \begin{equation}
     \ell_\nu(\thH):= \ell(\vy_\nu, \vf^{(H)}(\vx_\nu;\thH)).
 \end{equation}
 Let $\vz^{k,\nu}=(z_1^{k,\nu},\ldots,z_{H_k}^{k,\nu})^T$ be the input to the $H_k$ units in the $k$-th layer for $\vx_\nu$, i.e., 
\[
z^{k,\nu}_i = \sum_{j=1}^{H_k} w^k_{ij} \phi(z_j^{k-1,\nu}),
\]
where $w^k_{ij}$ is the weight parameter connecting from $\mathcal{U}_j^{k-1}$ to $\mathcal{U}_i^k$. 
Let 
\[
\delta^{k,\nu}_i := \frac{\partial \ell_\nu(\thH)}{\partial z^{k}_i}.
\]
Then, the back-propagation or generalized delta rule \cite{pdp} computes the derivatives by
\begin{equation}
    \delta^{k,\nu}_j  = \sum_{i=1}^{H_{k+1}} w^{k+1}_{ij}\delta^{k+1}_i\phi'(z^{k}_j), \qquad 
    \frac{\partial L_H(\thH)}{\partial w^k_{ij}}  =\sum_{\nu=1}^n \delta^{k,\nu}_{i}\phi(z^{k-1,\nu}_j).
    \label{eq:BP}
\end{equation}

Now consider the embedding using a unit in the $q$-th layer. 
Note that the output of any layer except $q$ in $\vf^{(H)}(\vx;\thH_\lambda)$ is equal to that of $\vf^{(H_0)}(\vx;\thHz_*)$, and the backpropagation of the both networks gives exactly the same $\delta^{k,\nu}_i$ to any $\mathcal{U}_{k,i}$ for $k>q$.  It follows that
\begin{equation}
   \frac{ \partial L_H(\thH)}{\partial V_0}\Bigl|_{\thH=\thH_\lambda} = \frac{ \partial L_{H_0}(\thHz)}{\partial V_0}\Bigl|_{\thHz=\thHz_*}=O.
\end{equation}

The derivatives of $L_{H_0}$ with respect to $\vzeta_j$ and $\vu_j$ ($1\leq j\leq H_0)$ are given by
\begin{align}
    \frac{\partial L_{H_0}(\thHz)}{\partial \vzeta_j} & = \sum_{\nu=1}^n \frac{\partial \ell_\nu(\thHz)}{\partial \vz^{q+1,\nu}} \frac{\partial \vz^{q+1,\nu}}{\partial \vzeta_j} 
     = \sum_{\nu=1}^n \vdelta^{q+1,\nu}\varphi(\vx_\nu; \vu_j,W_0) \label{eq:der1} \\
    \frac{\partial L_{H_0}(\thHz)}{\partial \vu_j} & = \sum_{\nu=1}^n \frac{\partial \ell_\nu(\thHz)}{\partial \vz^{q+1,\nu}}\frac{\partial \vz^{q+1,\nu}}{\partial \vu_j} 
     = \sum_{\nu=1}^n {\vdelta^{q+1,\nu}}^T\vzeta_j \frac{\partial\varphi(\vx_\nu; \vu_j, W_0)}{\partial \vu_j},  
\end{align}
where $\vdelta^{q+1,\nu}=(\delta^{q+1,\nu}_1,\ldots,\delta^{q+1,\nu}_{M})^T$. 

In the same manner, for $1\leq j\leq  H_0-1$, the derivatives of $L_H$ with respect to $\vu_j$ and $\vw_j$ are given by
\begin{align}
    \frac{\partial L_{H}(\thH)}{\partial \vv_j} & = \sum_{\nu=1}^n \frac{\partial \ell_\nu(\thH)}{\partial \vz^{q+1,\nu}} \frac{\partial \vz^{q+1,\nu}}{\partial \vv_j} 
     = \sum_{\nu=1}^n \vdelta^{q+1,\nu}\varphi(\vx_\nu; \vw_j,W_0)  \\
    \frac{\partial L_{H}(\thH)}{\partial \vw_j} & = \sum_{\nu=1}^n \frac{\partial \ell_\nu(\thH)}{\partial \vz^{q+1,\nu}}\frac{\partial \vz^{q+1,\nu}}{\partial \vw_j} 
     = \sum_{\nu=1}^n {\vdelta^{q+1,\nu}}^T\vv_j \frac{\partial\varphi(\vx_\nu; \vw_j, W_0)}{\partial \vw_j}.  \label{eq:der4}
\end{align}
It is obvious that these derivatives at $\thH=\thH_\lambda$ are equal to those of $L_{H_0}$ at $\thHz_*$, and thus equal to zero.  

For $H_0\leq j\leq H$, by the definition of $\thH_\lambda$, we have 
\begin{align}
      \frac{\partial L_{H}(\thH)}{\partial \vv_j}\Bigl|_{\thH_\lambda} & 
     = \sum_{\nu=1}^n \vdelta^{q+1,\nu}\varphi(\vx_\nu; \vw_j,W_0)\Bigl|_{\thH_\lambda} =  \sum_{\nu=1}^n \vdelta^{q+1,\nu}_*\varphi(\vx_\nu; \vu_{H_0*},W_{0*})  \\
    \frac{\partial L_{H}(\thH)}{\partial \vw_j}\Bigl|_{\thH_\lambda} & 
     = \sum_{\nu=1}^n {\vdelta^{q+1,\nu}}^T\vv_j \frac{\partial\varphi(\vx_\nu; \vw_j, W_0)}{\partial \vw_j}\Bigl|_{\thH_\lambda}
     = \lambda_j \sum_{\nu=1}^n {\vdelta^{q+1,\nu}_*}^T\vzeta_{H_0*} \frac{\partial\varphi(\vx_\nu; \vu_{H_0*}, W_{0*})}{\partial \vu_{H_0}},
\end{align}
which are zero from the stationary condition of $\thHz_*$.  We have also
\begin{align}
     &   \frac{\partial L_{H}(\thH)}{\partial W_0}\Bigl|_{\thH_\lambda}  
     = \sum_{\nu=1}^n \sum_{j=1}^{H}{\vdelta^{q+1,\nu}}^T\vv_j \frac{\partial\varphi(\vx_\nu; \vw_j, W_0)}{\partial W_0}\Bigl|_{\thH_\lambda}    \nonumber \\
     & =  \sum_{\nu=1}^n {\vdelta^{q+1,\nu}_*}^T \sum_{j=1}^{H_0-1} \vzeta_{j*} \frac{\partial\varphi(\vx_\nu; \vu_{j*}, W_{0*})}{\partial W_{0}} 
     + \sum_{\nu=1}^n {\vdelta^{q+1,\nu}_*}^T \sum_{j=H_0}^{H} \lambda_j \vzeta_{H_0*} \frac{\partial\varphi(\vx_\nu; \vu_{H_0*}, W_{0*})}{\partial W_{0}} \nonumber \\
     & = \sum_{\nu=1}^n {\vdelta^{q+1,\nu}_*}^T \sum_{j=1}^{H_0} \vzeta_{j*} \frac{\partial\varphi(\vx_\nu; \vu_{j*}, W_{0*})}{\partial W_{0}}  \\ \nonumber 
     & = \frac{\partial L_{H_0}(\thHz)}{\partial W_0}\Bigl|_{\thHz=\thHz_*} \\
     & = O,
\end{align}
which completes the proof.

\begin{figure}[t]
    \centering
    \includegraphics[keepaspectratio,height=6cm]{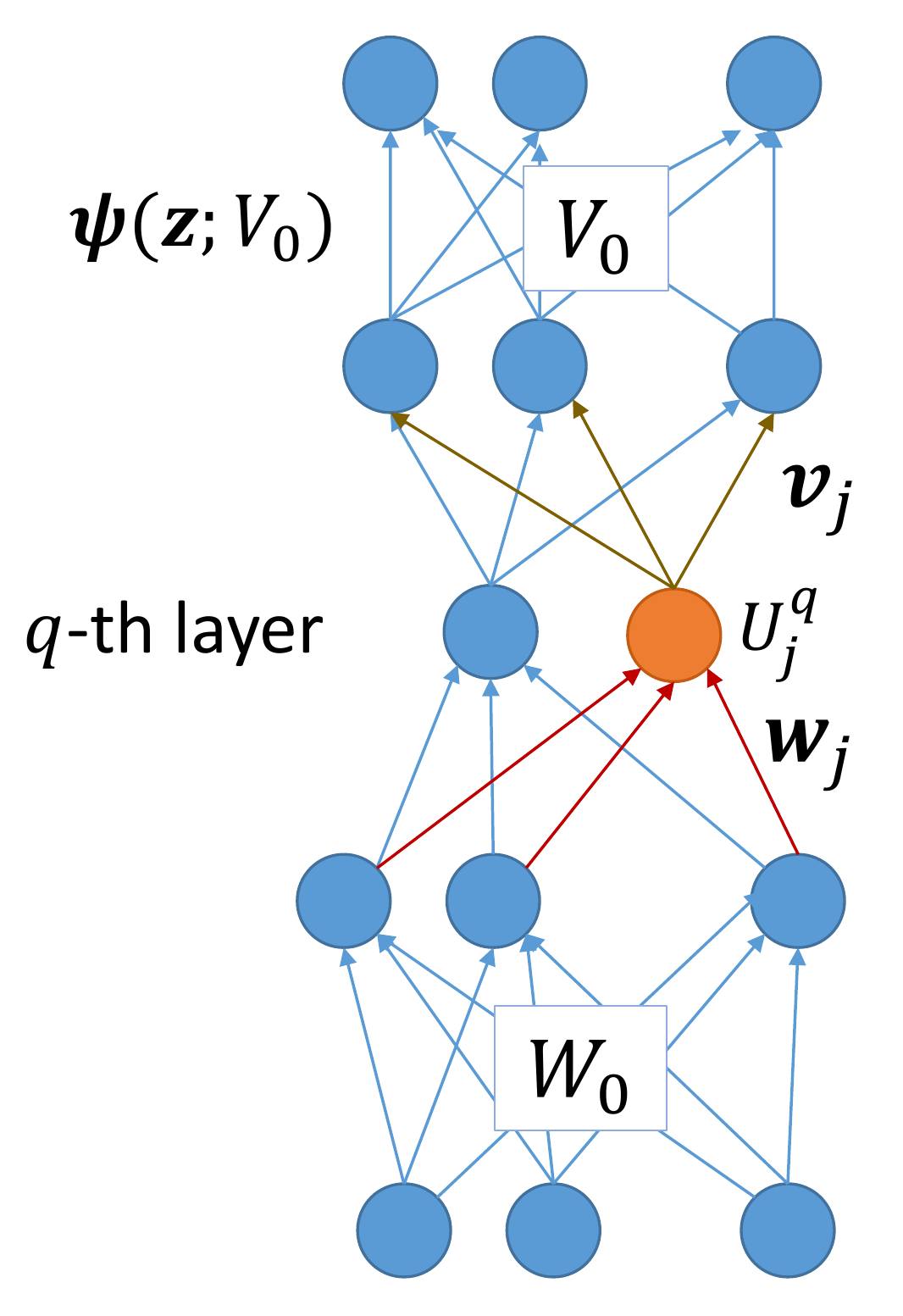}
    \caption{Function of neural networks}
    \label{fig:DNN}
\end{figure}

\section{Embedding by inactive units and propagation for smooth networks}
\label{sec:inactive_smooth}

As in Eqs.~(\ref{eq:der1}) through (\ref{eq:der4}), stationary conditions for $L_{H_0}$ give,  for $1\leq i\leq H_0$, 
\begin{align}
    \frac{\partial L_{H_0}(\thHz)}{\partial \vzeta_i} & 
     = \sum_{\nu=1}^n \vdelta^{q+1,\nu}\varphi(\vx_\nu; \vu_i,W_0) = {\bf 0} \nonumber \\
    \frac{\partial L_{H_0}(\thHz)}{\partial \vu_j} & 
     = \sum_{\nu=1}^n {\vdelta^{q+1,\nu}}^T\vzeta_i \frac{\partial\varphi(\vx_\nu; \vu_i, W_0)}{\partial \vu_i} ={\bf 0}.  \label{eq:der_H0}
\end{align}
The derivatives of $L_H$ with respect to $\vv_j$ and $\vw_j$ are given by
\begin{align}
    \frac{\partial L_{H}(\thH)}{\partial \vv_j} & = \sum_{\nu=1}^n \frac{\partial \ell_\nu(\thH)}{\partial \vz^{q+1,\nu}} \frac{\partial \vz^{q+1,\nu}}{\partial \vv_j} 
     = \sum_{\nu=1}^n \vdelta^{q+1,\nu}\varphi(\vx_\nu; \vw_j,W_0)  \label{eq:der_Hv} \\
    \frac{\partial L_{H}(\thH)}{\partial \vw_j} & = \sum_{\nu=1}^n \frac{\partial \ell_\nu(\thH)}{\partial \vz^{q+1,\nu}}\frac{\partial \vz^{q+1,\nu}}{\partial \vw_j} 
     = \sum_{\nu=1}^n {\vdelta^{q+1,\nu}}^T\vv_j \frac{\partial\varphi(\vx_\nu; \vw_j, W_0)}{\partial \vw_j}.  \label{eq:der_Hw}
\end{align}

In the case of inactive units, $\vv_j$ for $j\geq H_0+1$ is arbitrary and the $\frac{\partial\varphi(\vx_\nu; \vw^{(0)}, W_0)}{\partial \vw_j}$ is not necessarily zero, so that \eq{eq:der_H0} does not necessarily imply that \eq{eq:der_Hw} is zero.  In the case of inactive propagation, $vw_j$ is arbitrary for $j\geq H_0+1$, which does not mean \eq{eq:der_Hv} is zero in general. 

Consider the embedding by making both of units and propagation inactive; i.e., 
\begin{align}
    \vv_i & = \vzeta_i \quad (1\leq i\leq H_0) \nonumber \\
    \vw_i & = \vu_i \quad (1\leq i\leq H_0) \nonumber \\
    \vv_j & = {\bf 0} \quad (H_0+1\leq j\leq H) \nonumber \\
    \vw_j & = \vw^{(0)}   \quad (H_0+1\leq j\leq H). 
\end{align}
Then, for $j\geq H_0+1$, we have $\varphi(\vx;\vw_j,W_0)=0$  at $\vw_j=\vw^{(0)}$ which means \eq{eq:der_Hv} is zero, and \eq{eq:der_Hw} vanishes from $\vv_j=0$.  Therefore, the stationary point of $L_{H_0}$ is embedded to a stationary point of $L_H$, but there is no flat direction for this stationary point in general.

\section{Proofs of Lemmas \ref{lma:1st_der}, \ref{lma:Hessian}, and Theorem \ref{thm:flat} in Section \ref{sec:embed_smooth}}

In the sequel, we repeatedly use the following relations.
\begin{align}
&\frac{\partial }{\partial \vb}  =\sum_{j=H_0}^H \frac{\partial }{\partial \vw_j}, & 
\frac{\partial }{\partial \veta_c}  =\sum_{j=H_0}^H \alpha_{cj}\frac{\partial }{\partial \vw_j}, \nonumber \\ 
& \frac{\partial }{\partial \va}   = \sum_{j=H_0}^H \lambda_j \frac{\partial }{\partial \vv_j}, & 
\frac{\partial }{\partial \vxi_c}  = \sum_{k=H_0}^H \lambda_k \alpha_{ck}\frac{\partial }{\partial \vv_k}. \label{eq:der_repara}
\end{align}

\subsection{Proof of Lemma \ref{lma:1st_der}}
It follows from \eq{eq:der_repara} that 
\begin{align*}
    \frac{\partial \vf^{(H)}(\vx; \thH) }{\partial \vb}\Bigl|_{\thH=\thH_\lambda} & 
     =  \sum_{j=H_0}^H \frac{\partial \vf^{(H)}(\vx; \thH) }{\partial \vw_j}\Bigl|_{\thH=\thH_\lambda}  \nonumber \\
     & = \sum_{j=H_0}^H \vv_j  \frac{\partial \varphi(\vx;\vw_j) }{\partial \vw_j}\Bigl|_{\thH=\thH_\lambda} \nonumber \\
     & =  \sum_{j=H_0}^H \lambda_j \vzeta_{H_0}  \frac{\partial \varphi(\vx;\vu_{H_0}) }{\partial \vu_{H_0}} \nonumber \\
     & = \frac{\partial \vf^{(H_0)}(\vx; \thHo_*) }{\partial \vu_{H_0}},
\end{align*}
since $\sum_j \lambda_j=1$.  Also, 
\begin{align*}
    \frac{\partial \vf^{(H)}(\vx; \thH) }{\partial \veta_c}\Bigl|_{\thH=\thH_\lambda} & 
     =  \sum_{j=H_0}^H \alpha_{cj} \frac{\partial \vf^{(H)}(\vx; \thH) }{\partial \vw_j}\Bigl|_{\thH=\thH_\lambda}  \nonumber \\
     & = \sum_{j=H_0}^H \alpha_{cj}  \lambda_j \vzeta_{H_0}  \frac{\partial \varphi(\vx;\vu_{H_0}) }{\partial \vu_{H_0}} \; = 0,
\end{align*}
since $\sum_{j}\alpha_{cj}\lambda_j=0$ by definition of $A$. 

From \eq{eq:der_repara}, we have  
\begin{align*}
    \frac{\partial \vf^{(H)}(\vx; \thH) }{\partial \va}\Bigl|_{\thH=\thH_\lambda} & 
     =   \sum_{j={H_0}}^H \lambda_j \frac{\partial \vf^{(H)}(\vx; \thH) }{\partial \vv_j}\Bigl|_{\thH=\thH_\lambda}  \nonumber \\
     & = \sum_{j=H_0}^H \lambda_j \varphi(\vx;\vw_j) I \Bigl|_{\thH=\thH_\lambda} \nonumber \\
     & =   \varphi(\vx;\vu_{H_0,*}) I \nonumber \\
     & = \frac{\partial \vf^{(H_0)}(\vx; \thHz_*) }{\partial \vzeta_{H_0}},
\end{align*}
and
\begin{align*}
    \frac{\partial \vf^{(H)}(\vx; \thH) }{\partial \vxi_c}\Bigl|_{\thH=\thH_\lambda} & 
     =  \sum_{k=H_0}^H \lambda_k \alpha_{ck}\frac{\partial \vf^{(H)}(\vx; \thH) }{\partial \vv_k} \Bigl|_{\thH=\thH_\lambda}  \nonumber \\
     & = \sum_{k=H_0}^H \lambda_k\alpha_{ck} \varphi(\vx;\vu_{H_0}) I \;= 0.
\end{align*}

\subsection{Proof of Lemma \ref{lma:Hessian}}
We use the notation
\[
\vz_\nu = \vf^{(H)}(\vx_\nu; \thH).
\]

(i) First, we compute the blocks related to the derivative with respect to $\veta$. We have 
\begin{equation}\label{eq:d_eta}
  \frac{\partial L_H(\thH)}{\partial\veta_c}=\sum_{\nu=1}^n \frac{\partial \ell_\nu(\thH) }{\partial\vz_\nu}\frac{\partial \vz_\nu}{\partial \veta_c}  
     =  \sum_{\nu=1}^n \sum_{m=1}^M \frac{\partial \ell_\nu(\thH) }{\partial z_{\nu,m}} \sum_{j=H_0}^H \alpha_{cj} v_{jm} \frac{\partial \varphi(\vx_\nu;\vw_j)}{\partial \vw_j}.
\end{equation}
It follows from Eqs.~(\ref{eq:der_repara}) and (\ref{eq:d_eta}) that
\begin{align}
\frac{\partial^2 L_H(\thH)}{\partial{\veta_c}\partial\va} 
& = \sum_{k=H_0}^H \lambda_k \frac{\partial^2 L_H(\thH)}{\partial\veta_c\partial\vv_k} \nonumber  \\
& = \sum_{\nu=1}^n \sum_{m=1}^M\frac{\partial^2 \ell_\nu(\thH) }{\partial\vz_\nu\partial z_{\nu,m}} \sum_{k=H_0}^H \lambda_k \varphi(\vx_\nu; \vw_k) \sum_{j=H_0}^H \alpha_{cj} v_{jm} \frac{\partial \varphi(\vx_\nu;\vw_j)}{\partial \vw_j}  \nonumber  \\
   & \qquad + \sum_{\nu=1}^n \frac{\partial \ell_\nu(\thH) }{\partial\vz_\nu} \sum_{k=H_0}^H \alpha_{ck} \lambda_k  \frac{\partial \varphi(\vx_\nu;\vw_k)}{\partial \vw_k}. \label{eq:eta_a}
\end{align}
By inserting $\thH=\thH_\vlambda$, the first term is zero since $\vv_j=\lambda_j \vzeta_*$ and $\sum_j \alpha_{cj}\lambda_{j}=0$. The second term is also zero from $\sum_k \alpha_{ck}\lambda_k = 0$. 

Differentiation of \eq{eq:d_eta} with $\vb$ gives 
\begin{align}
   \frac{\partial^2 L_H(\thH)}{\partial\veta_c\partial\vb} & = 
   \sum_{\nu=1}^n \sum_{m,m'=1}^M \frac{\partial^2 \ell_\nu(\thH) }{\partial z_{\nu,m}\partial z_{\nu,m'}} \sum_{k=H_0}^H v_{km'} \frac{\partial \varphi(\vx_\nu;\vw_k)}{\partial \vw_k} \sum_{j=H_0}^H \alpha_{cj} v_{jm} \frac{\partial \varphi(\vx_\nu;\vw_j)}{\partial \vw_j}  \nonumber \\
   & \qquad +\delta_{jk} \sum_{\nu=1}^n \sum_{m=1}^M \frac{\partial \ell_\nu(\thH) }{\partial z_{\nu,m}}  \sum_{j=H_0}^H \alpha_{cj}v_{jm} \sum_{k=H_0}^H   \frac{\partial^2 \varphi(\vx_\nu;\vw_k)}{\partial \vw_k \partial\vw_k}.  \label{eq:eta_b}
\end{align}
At $\thH=\thH_\vlambda$, both the terms are zero for the same reason as \eq{eq:eta_a}.

Similarly, for $\vs_i=\vv_i$ or $\vw_i$ ($1\leq i\leq H_0-1$), 
\begin{align}
\frac{\partial^2 L_H(\thH)}{\partial{\veta_c}\partial\vs_i} 
& =  \frac{\partial^2 L_H(\thH)}{\partial\veta_c\partial\vs_i} \nonumber  \\
& = \sum_{\nu=1}^n \sum_{m=1}^M\frac{\partial^2 \ell_\nu(\thH) }{\partial\vz_\nu\partial z_{\nu,m}} \frac{\partial\vz_\nu}{\partial\vs_i} \sum_{j=H_0}^H \alpha_{cj} v_{jm} \frac{\partial \varphi(\vx_\nu;\vw_j)}{\partial \vw_j},  \nonumber  
\end{align}
which is zero at $\thH=\thH_\vlambda$ from $\sum_j \alpha_{cj}\lambda_j = 0$.

Next, from Eqs.~(\ref{eq:der_repara}) and (\ref{eq:d_eta}), we have 
\begin{align}
\frac{\partial^2 L_H(\thH)}{\partial\veta_c\partial\vxi_d} & 
= \sum_{\nu=1}^n \frac{\partial^2 \ell_\nu(\thH) }{\partial\vz_\nu\partial\vz_\nu} \sum_{k=H_0}^H \alpha_{dk}\lambda_k \varphi(\vx_\nu; \vw_k) \sum_{j=H_0}^H \alpha_{cj} \vv_j \frac{\partial \varphi(\vx_\nu;\vw_j)}{\partial \vw_j}  \nonumber  \\
   & \qquad + \sum_{\nu=1}^n \frac{\partial \ell_\nu(\thH) }{\partial\vz_\nu} \sum_{k=H_0}^H \alpha_{dk}\alpha_{ck} \lambda_k  \frac{\partial \varphi(\vx_\nu;\vw_k)}{\partial \vw_k}.  \label{eq:d_etaeta}
\end{align}
At $\thH=\thH_\lambda$, the first trem vanishes and the second term reduces to 
\begin{align*}
\frac{\partial^2 L_H(\thH_\lambda)}{\partial\veta_c\partial\vxi_d} = (A\Lambda A^T)_{cd}
\sum_{\nu=1}^n \frac{\partial \ell_\nu(\thHz_*) }{\partial\vz_\nu} \frac{\partial \varphi(\vx_\nu;\vu_{H_0,*})}{\partial \vu_{H_0}},
\end{align*}
which is $(A\Lambda A^T)_{cd} F$. 

\smallskip 
The block $\frac{L_H(\thH_\lambda)}{\partial{\veta_c}\partial{\veta_d}}$ can be computed in a similar way to \eq{eq:d_etaeta}:
\begin{align}
   \frac{\partial^2 L_H(\thH)}{\partial\veta_c\partial\veta_d} & = 
   \sum_{\nu=1}^n \sum_{m,m'=1}^M \frac{\partial^2 \ell_\nu(\thH) }{\partial z_{\nu,m'}\partial z_{\nu,m}}\sum_{j=H_0}^H \alpha_{cj} v_{jm} \frac{\partial \varphi(\vx_\nu;\vw_j)}{\partial \vw_j} \sum_{k=H_0}^H \alpha_{dk}v_{km'} \frac{\partial \varphi(\vx_\nu;\vw_k)}{\partial \vw_k}  \nonumber \\
   & \qquad + \sum_{\nu=1}^n \sum_{m=1}^M\frac{\partial \ell_\nu(\thH) }{\partial\vz_{\nu,m}}  \sum_{j=H_0}^H \alpha_{cj}\alpha_{dj} v_{jm}   \frac{\partial^2 \varphi(\vx_\nu;\vw_j)}{\partial \vw_j \vw_j}.  \nonumber
\end{align}
By plugging $\thH=\thH_\lambda$, the first term is zero, and the second term is reduced to 
\begin{equation}\label{eq:eta_eta}
    \sum_{j=H_0}^H \lambda_j\alpha_{cj}\alpha_{dj} \sum_{\nu=1}^n \frac{\partial \ell_\nu(\thH) }{\partial\vz_\nu}  \vzeta_{H_0,*}   \frac{\partial^2 \varphi(\vx_\nu;\vu_{H_0,*})}{\partial \vu_{H_0} \partial\vu_{H_0}},
\end{equation}
which is $(A\Lambda A^T)_{cd} G$.

\medskip
\noindent
(ii) Second, we will compute the remaining second derivatives including $\vxi_c$.  From \eq{eq:der_repara}, the first derivative with respect to $\vxi_c$ is given by 
\begin{equation}\label{eq:d_xi}
      \frac{\partial L_H(\thH)}{\partial\vxi_c}
     =  \sum_{\nu=1}^n \frac{\partial \ell_\nu(\thH) }{\partial\vz_\nu} \sum_{j=H_0}^H \lambda_j \alpha_{cj}\varphi(\vx_\nu;\vw_j) .
\end{equation}
From this expression, 
\begin{align*}
           \frac{\partial^2 L_H(\thH)}{\partial\vxi_c\partial\vv_k}\Bigl|_{\thH=\thH_\lambda} 
          & =  \sum_{\nu=1}^n\frac{\partial^2 \ell_\nu(\thH_\lambda) }{\partial\vz_\nu\partial\vz_\nu} \sum_{j=H_0}^H \lambda_j \alpha_{cj}\bigl(\varphi(\vx_\nu;\vu_{H_0,*})\bigr)^2\\
          & = 0,
\end{align*}
which means $\frac{\partial^2 L_H(\thH_\lambda )}{\partial\vxi_c\partial\vxi_d}$ and $\frac{\partial^2 L_H(\thH_\lambda)}{\partial\vxi_c\partial\va}$ are zero. 

It follows from Eqs.~(\ref{eq:d_xi}) and (\ref{eq:der_repara}) that
\begin{align*}
          & \frac{\partial^2 L_H(\thH)}{\partial\vxi_c\partial\vb}\Bigl|_{\thH=\thH_\lambda} \\
          & =  \sum_{\nu=1}^n \sum_{m=1}^M \frac{\partial^2 \ell_\nu(\thH_\lambda) }{\partial\vz_\nu\partial z_{\nu,m}}\sum_{j=H_0}^H \lambda_j \alpha_{cj}\varphi(\vx_\nu;\vu_{H_0,*}) \sum_{k=H_0}^H 
          v_{km}\frac{\partial\varphi(\vx_\nu;\vu_{H_0,*})}{\partial\vu_{H_0}} \\
          & \quad + \sum_{\nu=1}^n \frac{\partial \ell_\nu(\thH) }{\partial\vz_\nu} \sum_{j=H_0}^H \lambda_j \alpha_{cj} \frac{\partial\varphi(\vx_\nu;\vu_{H_0,*})}{\partial\vu_{H_0}}, 
\end{align*}
which is zero from $\sum_j \alpha_{cj}\lambda_j = 0$. 

It is also easy to see that for $\vs_i = \vv_i$ or $\vw_i$ ($1\leq i\leq H_0-1$) 
\[
\frac{\partial^2 L_H(\thH)}{\partial\vxi_c\partial\vs_i}\Bigl|_{\thH=\thH_\lambda} ={\bf 0}.
\]

\medskip
\noindent
(III) We compute the upper-left four blocks. We have 
\begin{equation}
  \frac{\partial L_H(\thH)}{\partial\va}  =  \sum_{\nu=1}^n   \frac{\partial \ell_\nu(\thH)  }{\partial\vz_\nu} \sum_{j=H_0}^H \lambda_j \varphi(\vx_\nu; \vw_j), 
\end{equation}
from which 
\begin{align*}
          \frac{\partial^2 L_H(\thH)}{\partial\va\partial\va}\Bigl|_{\thH=\thH_\lambda} 
          & = \sum_{\nu=1}^n \frac{\partial^2 \ell_\nu(\thHz_*) }{\partial\vz_\nu\partial\vz_\nu} \varphi(\vx_\nu;\vu_{H_0,*})^2 
           = \frac{\partial^2 L_{H_0}(\thHz_*)}{\partial\vzeta_{H_0}\partial\vzeta_{H_0}}
\end{align*}
and 
\begin{align*}
         &  \frac{\partial^2 L_H(\thH)}{\partial\va\partial\vb}\Bigl|_{\thH=\thH_\lambda} \\
         &  =  \sum_{\nu=1}^n \sum_{m=1}^M \frac{\partial^2 \ell_\nu(\thH) }{\partial\vz_\nu\partial z_{\nu,m}} \sum_{j=H_0}^H \lambda_j\varphi(\vx_\nu;\vw_j) \sum_{k=H_0}^H v_{km} \frac{\partial\varphi(\vx_\nu;\vw_k)}{\partial\vw_k}\Bigl|_{\thH=\thH_\lambda} \\
          & \qquad + \sum_{\nu=1}^n   \frac{\partial \ell_\nu(\thH)  }{\partial\vz_\nu} \sum_{j=H_0}^H \lambda_j \frac{\partial \varphi(\vx_\nu; \vw_j)}{\partial \vw_j}\Bigl|_{\thH=\thH_\lambda} \\
          &  =  \sum_{\nu=1}^n \frac{\partial^2 \ell_\nu(\thHz_*) }{\partial\vz_\nu\partial \vz_{\nu}} \vzeta_{H_0,*} \varphi(\vx_\nu;\vu_{H_0,*}) \frac{\partial\varphi(\vx_\nu;\vu_{H_0,*})}{\partial\vu_{H_0}}
          + \sum_{\nu=1}^n \frac{\partial \ell_\nu(\thHz_*)  }{\partial\vz_\nu}  \frac{\partial \varphi(\vx_\nu; \vu_{H_0,*})}{\partial \vu_{H_0}} \\
          & = \frac{\partial^2 L_{H_0}(\thHz_*)}{\partial\vzeta_{H_0}\partial\vu_{H_0}}.
\end{align*}

\smallskip 
\noindent
Finally, using 
$$
  \frac{\partial L_H(\thH)}{\partial\vb}=\sum_{\nu=1}^n \frac{\partial \ell_\nu(\thH) }{\partial\vz_\nu}\frac{\partial \vz_\nu}{\partial \vb}  
     =  \sum_{\nu=1}^n\sum_{m=1}^M \frac{\partial \ell_\nu(\thH) }{\partial z_{\nu,m}} \sum_{j=H_0}^H v_{jm} \frac{\partial \varphi(\vx_\nu;\vw_j)}{\partial \vw_j},
$$
we have 
\begin{align*}
 &  \frac{\partial^2 L_H(\thH)}{\partial\vb\partial\vb}\Bigl|_{\thH=\thH_\lambda} \\  
     & =  \sum_{\nu=1}^n \sum_{m,m'=1}^M\frac{\partial^2 \ell_\nu(\thH) }{\partial z_{\nu,m}\partial z_{\nu,m'}} \sum_{j=H_0}^H v_{jm} \frac{\partial \varphi(\vx_\nu;\vw_j)}{\partial \vw_j}\sum_{k=1}^H  v_{km'} \frac{\partial \varphi(\vx_\nu;\vw_k)}{\partial \vw_k}\Bigl|_{\thH=\thH_\lambda}  \\
    & \qquad  + \sum_{\nu=1}^n\sum_{m=1}^M\frac{\partial \ell_\nu(\thH) }{\partial z_{\nu,m}} \sum_{j=H_0}^H v_{jm} \frac{\partial^2 \varphi(\vx_\nu;\vw_j)}{\partial \vw_j\partial\vw_j}\Bigl|_{\thH=\thH_\lambda} \\
  & =  \sum_{\nu=1}^n \Bigl( \vzeta_{H_0,*}^T \frac{\partial^2 \ell_\nu(\thHz_*) }{\partial \vz\partial\vz} \vzeta_{H_0,*} \Bigr) \frac{\partial \varphi(\vx_\nu;\vu_{H_0,*})}{\partial \vu_{H_0}} \frac{\partial \varphi(\vx_\nu;\vu_{H_0,*})}{\partial \vu_{H_0}}  \nonumber \\
  & \qquad   + \sum_{\nu=1}^n\frac{\partial \ell_\nu(\thH) }{\partial\vz_\nu}  \vzeta_{H_0,*} \frac{\partial^2 \varphi(\vx_\nu;\vu_{H_0,*})}{\partial \vu_{H_0}\vu_{H_0}} \nonumber \\
  & = \frac{\partial^2 L_{H_0}(\thHz_*)}{\partial\vu_{H_0}\partial\vu_{H_0}}.
\end{align*}

\medskip
\noindent
(iv) Finally, it is similarly proved that for $\vs_i = \vv_i$ or $\vw_i$ ($1\leq i\leq H_0-1$) 
\begin{align*}
\frac{\partial^2 L_H(\thH)}{\partial\va\partial\vs_i}\Bigl|_{\thH=\thH_\vlambda} & =
\frac{\partial^2 L_{H_0}(\thHz_*)}{\partial\vzeta_{H_0}\partial\vs_i}   \nonumber \\
 \frac{\partial^2 L_H(\thH)}{\partial\vb\partial\vs_i}\Bigl|_{\thH=\thH_\vlambda} & = 
 \frac{\partial^2 L_{H_0}(\thHz_*)}{\partial\vu_{H_0}\partial\vs_i}.
\end{align*}

This completes the proof. 

\subsection{Proof of Theorem \ref{thm:flat}}
\label{sec:proof_flat}

Let $\tilde{F}:=(A\Lambda A^T)\otimes F$ and $\tilde{G}:=(A\Lambda A^T)\otimes G$.  Since $\lambda_j\neq 0$ ($\forall j$) and $A$ is of full rank, $(A\Lambda A^T)$ is of full rank.  
(i) Under the assumption, $\tilde{G}$ is invertible.  Then, the lower-right four blocks of the Hessian has the expression 
\begin{equation}\label{eq:G}
\begin{pmatrix}I & -\tilde{F}\tilde{G}^{-1} \\ O & I \end{pmatrix}
\begin{pmatrix}O & \tilde{F} \\ \tilde{F}^T & \tilde{G} \end{pmatrix}
\begin{pmatrix}I & O \\ -\tilde{F}\tilde{G}^{-1} & I \end{pmatrix}
= 
\begin{pmatrix} -\tilde{F}^T \tilde{G}^{-1} \tilde{F} & O \\ O & \tilde{G}. \end{pmatrix}.
\end{equation}
If $G$ is positive definite, so is $\tilde{G}$, and thus $-\tilde{F}^T \tilde{G}^{-1}\tilde{F}$ has negative eigenvalues for $F\neq O$. The Hessian of $L_H$ at $\thH_\vlambda$ has both of positive and negative eigenvalues, which implies $\thH_\vlambda$ is a saddle point.  The case of negative definite $G$ is similar. 
(ii) If $G$ has positive and negative definite, so does $\tilde{G}$.  This means that the Hessian of $L_H$ at $\thH_\vlambda$ has positive and negative eigenvalues.

\section{Local minima for smooth networks of 1-dimensional output}\label{sec:localmin_smooth_M1}

 The special property of $M=1$ is caused by vanishing $\tilde{F}$ in the Hessian.  In fact, the stationarity condition $\frac{\partial L_{H_0}(\thHz_*)}{\partial \vu_{H_0}}=0$ implies 
\[
  \zeta_{H_0,*}\sum_{\nu=1}^n \frac{\partial \ell_\nu(\thHz_*) }{\partial z_\nu} \frac{\partial \varphi(\vx_\nu;\vu_{H_0*})}{\partial \vu_{H_0}} = 0.
\]
Note that $\zeta_{H_0}$ is a scalar, and if we assume $\zeta_{H_0*}\neq 0$, the above condition implies $F=0$.  Then the corresponding part of the Hessian takes the form  
\[
\begin{pmatrix}O & O \\ O & \tilde{G}\end{pmatrix},
\]
which does not have negative eigenvalues if $G$ is non-negetive definite.   The zero blocks of the Hessian correspond to the directions $\vxi_c$ ($c=H_0+1,\ldots,H$), which make an affine subspace of $\Pi_{repl}(\thHz_\vlambda)$ having the same value $L_H(\thH)=L_{H_0}(\thHz_*)$.  Therefore, only the Hessian in the directions $(\va,\vb,\veta_{H_0+1},\ldots,\veta_H)$ matters to determine if $\thH_\lambda$ is a  minimum or saddle point.  
Note also that for $M\geq 2$ the stationarity condition gives 
\[
  \sum_{\nu=1}^n \sum_{m=1}^M  \frac{\partial \ell_\nu(\thHz_*) }{\partial z_{\nu,m}}\zeta_{H_0,m*} \frac{\partial \varphi(\vx_\nu;\vu_{H_0,*})}{\partial \vu_{H_0}} = 0,
\]
which does not necessary mean $F=O$.  


The following theorem is a slight extension of \citet[Theorem 3]{localmin}, in which only the case $H=H_0+1$ is discussed. 
\begin{thm}\label{thm:localmin_smooth_M1}
Suppose that the dimension of the output is 1 and $\thHz_*$ is a minimum of $L_{H_0}$ with positive definite Hessian matrix. In the following, the matrix $G$ and the parameter $\thH_\lambda$ are used in the same meaning as in Lemma \ref{lma:Hessian}. 
\begin{enumerate}
\item[(1)] Assume that the matrix $G$ is positive definite.
\begin{enumerate}
    \item [(a)] $\thH_\vlambda$ with $\sum_{j=H_0}^H \lambda_j = 1$ and $\lambda_j>0$ ($\forall j$)  is a minimum of $L_H$.    
    \item [(b)] $\thH_\vlambda$ with $\sum_{j=H_0}^H \lambda_j = 1$ and $\lambda_j<0 \text{ for some }j$ is a saddle point of $L_H$.
\end{enumerate}
\item[(2)] Assume that the matrix $G$ is negative definite. 
\begin{enumerate}
    \item [(a)]  If $\sum_{j=H_0}^H \lambda_j = 1$ and there is only one $i_0$  such that $\lambda_{i_0}>0$ and $\lambda_j < 0$ ($\forall j\neq i_0$), $\thH_\vlambda$ is a minimum of $L_H$.    
    \item [(b)] If $\sum_{j=H_0}^H \lambda_j= 1$ and $\lambda_j>0$ for at least two indices, $\thH_\vlambda$  is a saddle point of $L_H$.
\end{enumerate}
\item[(3)]  If the matrix $G$ has both of positive and negative eigenvalues, $\thH_\vlambda$ is a saddle point for any $\vlambda$ with $\sum_{a=H_0}^{H} \lambda_a = 1$ and $\lambda_a\neq 0$ ($\forall a$). 
\end{enumerate}
\end{thm}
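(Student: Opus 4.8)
The plan is to reduce the whole classification to the block structure of the Hessian in Lemma~\ref{lma:Hessian} together with one genuinely flat direction. The special feature of $M=1$, already recalled in Section~\ref{sec:localmin_smooth_M1}, is that the matrix $F$ of Lemma~\ref{lma:Hessian} vanishes: the stationarity identity $\partial L_{H_0}(\thHz_*)/\partial\vu_{H_0}=\zeta_{H_0,*}\,F=\mathbf 0$ gives $F=\mathbf 0$ once $\zeta_{H_0,*}\neq0$, and $\zeta_{H_0,*}=0$ is excluded here because it would make the $\vu_{H_0}$-diagonal block of $\nabla^2 L_{H_0}(\thHz_*)$ equal to $O$, contradicting positive definiteness. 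Hence $\tilde F=O$, and in the coordinates $(\thHz\text{-part},\vxi,\veta)$ coming from \eq{eq:new_param} (with $\va\leftrightarrow\vzeta_{H_0}$, $\vb\leftrightarrow\vu_{H_0}$) Lemma~\ref{lma:Hessian} gives the block-diagonal form $\nabla^2 L_H(\thH_\vlambda)=\mathrm{Diag}\bigl(\nabla^2 L_{H_0}(\thHz_*),\,O,\,\tilde G\bigr)$ with $\tilde G=(A\Lambda A^T)\otimes G$. I would also record the flatness observation of Section~\ref{sec:localmin_smooth_M1}: on the affine subspace $\mathcal S$ through $\thH_\vlambda$ spanned by the $\vxi$-directions one has $\sum_{j=H_0}^H\vv_j=\va$ while all $\vw_j=\vb$, so $\vf^{(H)}$, hence $L_H$, is \emph{identically} $L_{H_0}(\thHz_*)$ on $\mathcal S$; moreover a short back-propagation computation in the style of Lemma~\ref{lma:1st_der}, using $F=\mathbf 0$ and the stationarity of $\thHz_*$, shows that the full gradient $\nabla L_H$ vanishes on all of $\mathcal S$, not merely at $\thH_\vlambda$.

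\textbf{Step 2: inertia of $A\Lambda A^T$.} Next I would compute the inertia of $A\Lambda A^T$ purely linearly. Using the invertible matrix $\hat A=\bigl(\begin{smallmatrix}{\bf 1}^T\\ A\end{smallmatrix}\bigr)$ from (A1), the relations $A\Lambda{\bf 1}=A\vlambda=\mathbf 0$ (from (A2), since $\Lambda{\bf 1}=\vlambda$) and ${\bf 1}^T\Lambda{\bf 1}=\sum_j\lambda_j=1$ give $\hat A\Lambda\hat A^T=\mathrm{Diag}(1,A\Lambda A^T)$, so by Sylvester's law of inertia $n_+(A\Lambda A^T)=n_+(\Lambda)-1$ and $n_-(A\Lambda A^T)=n_-(\Lambda)$, where $n_\pm(\Lambda)$ count the positive/negative $\lambda_j$. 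Since the eigenvalues of a Kronecker product of symmetric matrices are the pairwise products of eigenvalues, this pins down the inertia of $\tilde G$, hence of $\nabla^2 L_H(\thH_\vlambda)$: $\tilde G\succ0$ exactly when $A\Lambda A^T$ and $G$ have the same definiteness, which occurs in (1)(a) ($G\succ0$ and $A\Lambda A^T\succ0$, as all $\lambda_j>0$) and in (2)(a) ($G\prec0$, and $A\Lambda A^T\prec0$ because $n_+(\Lambda)=1$ forces $n_+(A\Lambda A^T)=0$). In the remaining cases $\tilde G$ has a negative eigenvalue: in (1)(b), $G\succ0$ and $n_-(A\Lambda A^T)=n_-(\Lambda)\ge1$; in (2)(b), $G\prec0$ and $n_+(A\Lambda A^T)=n_+(\Lambda)-1\ge1$; in (3), $A\Lambda A^T$ is nonsingular and has an eigenvalue of definite sign that can be matched against an eigenvalue of $G$ of the opposite sign. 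In these three cases $\nabla^2 L_H(\thH_\vlambda)$ is indefinite (it also has the positive eigenvalues of $\nabla^2 L_{H_0}(\thHz_*)\succ0$), and since $\thH_\vlambda$ is a stationary point by Theorem~\ref{thm:stationary}, it is a saddle point; this settles (1)(b), (2)(b) and (3).

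\textbf{Step 3: the local-minimum cases.} For (1)(a) and (2)(a) I then have $\tilde G\succ0$, so $\nabla^2 L_H(\thH_\vlambda)$ is positive definite on the complement $\mathcal C$ of the $\vxi$-directions spanned by the $\thHz$-part and $\veta$ coordinates, but only positive semidefinite overall, so the Hessian alone cannot certify a minimum. The plan here is a ``degenerate Morse'' argument: write a parameter near $\thH_\vlambda$ uniquely as $p+r$ with $p\in\mathcal S$ close to $\thH_\vlambda$ and $r\in\mathcal C$ small, and Taylor-expand in the direction $r$ about the foot point $p$, namely $L_H(p+r)=L_H(p)+\nabla L_H(p)\cdot r+\tfrac12 r^T\nabla^2 L_H(p)\,r+R$ with $|R|\le\omega(\|r\|)\,\|r\|^2$ and $\omega(\delta)\to0$ uniformly for $p$ near $\thH_\vlambda$ (valid as $L_H$ is $C^2$). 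Since $p\in\mathcal S$, the first two terms are $L_{H_0}(\thHz_*)$ and $\mathbf 0$; and since $\nabla^2 L_H(\thH_\vlambda)|_{\mathcal C}\succ0$, continuity gives $\nabla^2 L_H(p)|_{\mathcal C}\succeq c\,I$ for $p$ close enough, so $L_H(p+r)\ge L_{H_0}(\thHz_*)+\tfrac c2\|r\|^2-\omega(\|r\|)\|r\|^2\ge L_{H_0}(\thHz_*)$ for $\|r\|$ small. Hence $\thH_\vlambda$ (indeed the whole nearby part of $\mathcal S$) is a local minimum.

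\textbf{Main obstacle.} The linear algebra of Step 2 and the Hessian bookkeeping are routine given Lemma~\ref{lma:Hessian}. The delicate point is Step 3: because $\thH_\vlambda$ sits on a genuinely flat valley $\mathcal S$ on which $L_H$ is \emph{exactly} constant, a second-order expansion at $\thH_\vlambda$ alone is insufficient (the cubic error along $\mathcal S$ is not controlled by the transverse quadratic term), and one must use that $L_H$ \emph{and} $\nabla L_H$ vanish identically on $\mathcal S$ to obtain the uniform transverse positive-definiteness that closes the estimate — so verifying $\nabla L_H\equiv\mathbf 0$ on $\mathcal S$ (which is where $F=\mathbf 0$ enters) is the real work. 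A minor additional wrinkle in (1)(b) and (2)(b) is that some $\lambda_j$ may vanish, where the reparametrization \eq{eq:new_param} degenerates; this does not occur when $H=H_0+1$ (the case treated in \cite{localmin}), and otherwise one replaces the spectral argument by a direct perturbation of the offending $(\vv_j,\vw_j)$, which still produces a second-order descent direction.
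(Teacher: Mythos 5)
Your proposal is correct, and its skeleton ($F=O$ from stationarity in the $M=1$ case, the block Hessian of Lemma \ref{lma:Hessian}, the inertia of $(A\Lambda A^T)\otimes G$, and the flat $\vxi$-valley for the minimum cases) is the same as the paper's. Two points where you genuinely improve on or diverge from the paper's own proof are worth recording. First, your single Sylvester computation $\hat A\Lambda\hat A^T=\mathrm{Diag}(1,A\Lambda A^T)$, giving $n_+(A\Lambda A^T)=n_+(\Lambda)-1$ and $n_-(A\Lambda A^T)=n_-(\Lambda)$, disposes of all five cases at once; the paper uses this identity only for (1)(b), (2)(b) and (3), and proves (2)(a) by an explicit and rather laborious verification that $\Lambda_0+\lambda_H^{-1}\vlambda_0\vlambda_0^T$ is negative definite, which your inertia count renders unnecessary. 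Second, and more substantively, your Step 3 fills a real gap: the paper simply asserts that constancy of $L_H$ on the $\vxi$-plane $\Pi_{\thHz_*}$ plus positive definiteness of the Hessian in the complementary directions yields a minimum, but that implication is false without the additional fact that $\nabla L_H$ vanishes on the \emph{whole} plane (e.g.\ $f(x,y)=y^2-x^2y$ is zero on the $x$-axis with transverse Hessian $2>0$ at the origin, yet the origin is not a minimum). Your Morse--Bott-style expansion about the foot point $p\in\mathcal S$ is exactly the missing argument; note that the required identity $\nabla L_H\equiv 0$ on $\mathcal S$ follows most directly from Theorem \ref{thm:stationary} itself, since (using $\zeta_{H_0,*}\neq 0$, which you correctly extract from the positive-definite Hessian hypothesis) every point of $\mathcal S$ is $\thH_{\vmu}$ for some $\vmu$ with $\sum_j\mu_j=1$. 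Your closing remark about vanishing $\lambda_j$ in cases (1)(b) and (2)(b) flags a degeneracy of the reparameterization \eq{eq:new_param} that the paper does not address; for the saddle conclusion it is harmless (a negative eigenvector of the possibly degenerate congruence $J^T\nabla^2L_H J$ still pushes forward to a descent direction), but it is right to mention it.
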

\begin{proof}
For notational simplicity, the proof is given only for $H_0=1$; $\zeta_1$ and $\vu_1$ are written by $\zeta$ and $\vu$, respectively.  Extension to a general $H_0$ is easy and we omit it.   In the proof, let $\tilde{A}^T:=({\bf 1}_{H} A^T)$, which is invertible by assumption.  Note also that $\zeta, v_j$ are scalar parameters in the case of $M=1$.

\smallskip
\noindent
(1-a).  We first show that if $G$ is positive definite, the lower-right block of the Hessian, $\frac{\partial^2 L_H(\thH_\lambda)}{\partial{\veta}\partial{\veta}}=(A\Lambda A^T)\otimes G$, is positive definite.  This can be proved if $A\Lambda A^T$ is positive definite, since the eigenvalues of the tensor product is given by the products of respective eigenvalues of $A\Lambda A^T$ and $G$. By the assumptions,  $A\Lambda A^T$ is non-negative definite.  Suppose $A\Lambda A^T\bm{s}=0$ for $\bm{s}\in \R^{H-1}\backslash \{0\}$. Then, $A^T \bm{s}=0$, and this implies $\tilde{A}^T \tilde{\bm s}=0$ for  $\tilde{\bm{s}}=(\bm{s}^T,0)^T\in\R^H$.  This is impossible by the invertible assumption of $\tilde{A}$. 

Now consider the Hessian $\nabla^2 L_H(\thH_\vlambda)$ in Lemma \ref{lma:Hessian}.  It is obvious that this Hessian is non-negative definite, but not positive definite, as the blocks corresponding to $(\xi_j)_{j=2}^H$ are zero.  
Let $\Pi_{\thHz_*}$ be the $(H-1)$ dimensional affine plane in the parameter space of $\NN_{H}$ such that 
$$
\Pi_{\thHz_*}:=\{ (a,\xi_2,\ldots,\xi_H;\vb,\veta_2,\ldots,\veta_{H})\mid a=\zeta_*,\vb=\vu_*,\veta_2=\cdots=\veta_{H}=0 \}.
$$ 
This plane includes $\thH_\lambda$, and is parallel to the subspace spanned by $\xi_j$ axes.  The function $L_H$ takes the same value as $L_1(\thHo_*)$ on the whole of $\Pi_{\thHz_*}$. Thus, $\thH_\lambda$ is a minimum of $L_H$ if the Hessian is positive definite along 
the directions compliment to $\Pi_{\thHz_*}$ (see Figure \ref{fig:localmin}).  From Lemma \ref{lma:Hessian}, 
the Hessian at $\thH_\lambda$ along the directions $(\va,\vb, \veta_j)$ is given by 
$$
\begin{pmatrix}
 \frac{\partial^2 L_1(\thHo_*)}{\partial{\thHz_*}\partial{\thHz_*}}  & O \\ 
O & ( A\Lambda A^T) \otimes G
\end{pmatrix},
$$
which is positive definite.  This completes the proof of (1-a). 

\begin{figure}
    \centering
    \includegraphics[height=6cm]{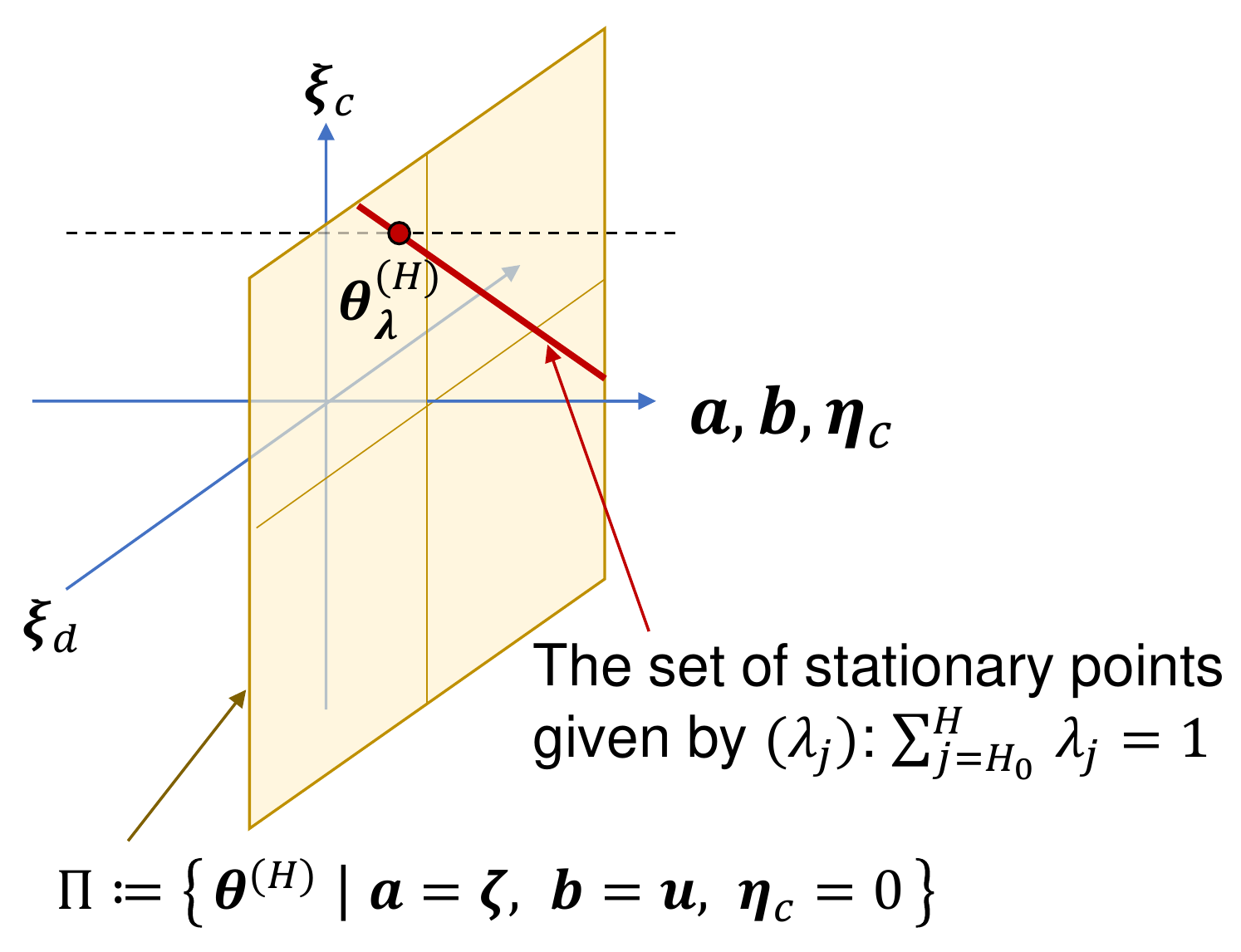}
    \caption{All the parameters on the affine subspace $\Pi$ has the same function as $\vf^{(H)}(\vx;\thH_\vlambda)$, and the affine subspace (in red) is a set of stationary points of $L_H(\thH)$.  The local behavior of $L_H$ around $\thH_\vlambda$ is determined by the second derivative along the $\va,\vb,\veta_c$ directions. }
    \label{fig:localmin}
\end{figure}

\smallskip
\noindent 
(1-b) From $A\lambda = 0$, it is easy to see that  
$$
\tilde{A}\Lambda\tilde{A}^T = \begin{pmatrix}1  & 0 \\ 0^T &  A\Lambda A^T \end{pmatrix}.
$$
Thus, the eigenvalues of $\tilde{A}\Lambda \tilde{A}^T$ is the eigenvalues of $A\Lambda A^T$ and $1$.  By Sylvester's law of inertia, the signature (the pair of the number of positive eigenvalues and that of negative ones) of $\tilde{A}\Lambda \tilde{A}$ coincides with the signature of $\Lambda$.  Since some $\lambda_i$ are negative by the assumption, $A\Lambda A^T$ has a negative eigenvalue.  Thus, under the assumption that $G$ is positive definite, $(A\Lambda A^T)\otimes G$ has a negative eigenvalue.  Since $\frac{\partial^2 L_H(\thH_\lambda)}{\partial\va\partial\va}$ is positive definite, the Hessian of $L_H(\thH)$ at $\thH=\thH_\lambda$ has positive and negative eigenvalues, which means $\thH_\vlambda$ is a saddle point.  

\smallskip
\noindent 
(2-a)
It suffices to show that $A\Lambda A^T$ is negative definite.  Then, $(A\Lambda A^T)\otimes G$ is positive definite, and the assertion is proved by the same argument as (1-a). Without loss of generality, we can assume that $\lambda_j<0$ for $1\leq j\leq H-1$ and $\lambda_H>0$.  Let $A=(A_0,\bm{h})$ where $A_0$ is an invertible matrix of size $H-1$, and let $\vlambda^T = (\vlambda_0^T, \lambda_H)$ with $\vlambda_0\in \R^{H-1}$.  The elements of $\vlambda_0$ are all negative by assumption. It follows that 
$$
A_0\vlambda_0 + \lambda_H \bm{h} = {\bf 0}, \qquad \sum_{j=1}^H \lambda_j = 1.
$$
A simple computation using $\bm{h}=-\frac{1}{\lambda_H}A_0 \vlambda_0$ provides
$$
A\Lambda A^T = A_0 \Bigl(\Lambda_0 + \frac{1}{\lambda_H}\vlambda_0 \vlambda_0^T \Bigr) A_0^T,
$$
where $\Lambda_0=\text{Diag}(\lambda_1,\ldots,\lambda_{H-1})$.  It is then sufficient to show that $B_0:= \Lambda_0 + \frac{1}{\lambda_H}\vlambda_0 \vlambda_0^T$ is negative definite.  If $\vs\in \R^{H-1}\backslash\{0\}$ is orthogonal to $\vlambda_0$, we have $\vs^T B_0 \vs = \vs^T\Lambda_0\vs < 0$.
Additionally, 
\begin{align*}
    \vlambda_0^T B_0 \vlambda_0  & = \sum_{j=1}^{H-1} \lambda_j^3 + \frac{1}{\lambda_H}\Bigl( \sum_{j=1}^{H-1}\lambda_j^2\Bigr)^2  \\
    & =  \frac{1}{\lambda_H}\Bigl\{  \Bigl(1-\sum_{j=1}^{H-1} \lambda_j \Bigr) \Bigl(\sum_{j=1}^{H-1} \lambda_j^3\Bigr)  +\Bigl( \sum_{j=1}^{H-1}\lambda_j^2\Bigr)^2\Bigr\} \\
    & = \frac{1}{\lambda_H}\Bigl\{  \Bigl(\sum_{j=1}^{H-1} \lambda_j^3\Bigr) + \sum_{i\neq j} \lambda_i^2 \lambda_j^2 - \sum_{i\neq j} \lambda_i \lambda_j^3 \Bigr\}  \\
    & = \frac{1}{\lambda_H}\Bigl\{  \Bigl(\sum_{j=1}^{H-1} \lambda_j^3\Bigr) + \sum_{i\neq j} \lambda_i^2 \lambda_j^2 - \sum_{i\neq j} \lambda_i \lambda_j \frac{\lambda_i^2+\lambda_j^2}{2} \Bigr\} \\
    & = \frac{1}{\lambda_H}\Bigl\{  \Bigl(\sum_{j=1}^{H-1} \lambda_j^3\Bigr) - \sum_{i\neq j} \frac{1}{2}\lambda_i \lambda_j (\lambda_i-\lambda_j)^2 \Bigr\} ,
\end{align*}
which is negative as well.  This proves the assertion.

\smallskip
\noindent
(2-b)  If there are two positive eigenvalues, the corresponding eigenspaces of at least two dimensions must intersects with the $H-1$ dimensional subspace spanned by the row vectors of $A$. Thus, $A\Lambda A^T$ has at least one positive eigenvalue, which means $(A\Lambda A^T)\otimes G$ has negative eigenvalues. The remaining proof is similar to (1-b).

\noindent 
(3) $A\Lambda A^T$ is of full rank, and thus $(A\Lambda A^T)\otimes G$ has both of positive and negative eigenvalues. The assertion is proved by the same argument as the case (1-b). 
\end{proof}

\section{Proof of Proposition \ref{prop:inv_relu2} and Theorem \ref{thm:relu_saddles} in Section \ref{sec:min_relu}}

\subsection{Proof of Proposition \ref{prop:inv_relu2}}
\label{sec:proof_inv_relu2}
First, note that, from $\vu_{H_0*}^T \vx_\nu\neq 0 (\forall \nu)$, there is $\delta>0$ such that for each $\vx_\nu$ the sign of $(\vu_{H_0,*}+\sum_{c=H_0+1}^H \alpha_{cj}\veta_c)^T \vx_\nu$ equals to that of $\vu_{H_0,*}^T\vx_\nu$ for any $j=H_0,\ldots,H$ and $(\veta_c)_{c=H_0+1}^H$ such that $\|(\veta_{H_0+1},\cdots,\veta_H)\|\leq \delta$.

Fix $\vx_\nu$, and assume first $\vu_{H_0,*}^T \vx_\nu > 0$.  Then, $(\vu_{H_0,*}+\sum_{c=H_0+1}^H \alpha_{cj}\veta_c)^T \vx_\nu > 0$ holds for $(\veta_c)_c$ with $\|(\veta_c)_c\|\leq \delta$. 
With the notation 
\begin{equation}\label{eq:embed_F}
\mathcal{F}_{H_0}:=\sum_{i=1}^{H_0-1} \vv_i \varphi( \vx_\nu;\vw_i) = \sum_{i=1}^{H_0-1}\vzeta_{i,*}\varphi(\vx_\nu; \vu_{i,*}),
\end{equation}
for any $\thH\in B_\delta^\veta(\thH_{\vgamma,\vbeta})$, we have 
\begin{align*}
\vf^{(H)}(\vx_\nu;\thH) & = 
    \mathcal{F}_{H_0} + \sum_{j=H_0}^H \gamma_j \vzeta_{H_0,*}\, \varphi\Bigl( \beta_j \bigl(\vu_{H_0,*}+\sum_{c=H_0+1}^H\alpha_{cj}\veta_c\bigr)^T \vx_\nu\Bigr) \\
    & = \mathcal{F}_{H_0} + \sum_{j=H_0}^H \gamma_j  \vzeta_{H_0,*}\, \beta_j\Bigl(\vu_{H_0,*}+\sum_{c=H_0+1}^H\alpha_{cj}\veta_c\Bigr)^T \vx_\nu  \\
    & = \mathcal{F}_{H_0} + \sum_{j=H_0}^H \gamma_j \beta_j\vzeta_{H_0,*}  \vu_{H_0,*}^T\vx_\nu +\vzeta_{H_0,*} \sum_{c=H_0+1}^H\sum_{j=H_0}^H\alpha_{cj} \gamma_j \beta_j  \veta_c^T\vx_\nu \\
    & = \mathcal{F}_{H_0} + \vzeta_{H_0,*} \vu_{H_0,*}^T\vx_\nu \nonumber \\
    & = \vf^{(H_0)}(\vx_\nu;\thHz_*),
\end{align*}
where we used $\sum_j \gamma_j \beta_j =1$ and $\sum_j \alpha_{cj} \gamma_j\beta_j = 0$. 

Next, if  $\vu_{H_0,*}^T \vx_\nu < 0$, we have 
$$
\vf^{(H_0)}(\vx_\nu;\thHz_*) = \mathcal{F}_{H_0},
$$
and 
\begin{align*}
\vf^{(H)}(\vx_\nu;\thH) & = 
    \mathcal{F}_{H_0} + \sum_{j=H_0}^H \gamma_j \vzeta_{H_0,*}\, \varphi\Bigl( \beta_j \bigl(\vu_{H_0,*}+\sum_{c=H_0+1}^H\alpha_{cj}\veta_c\bigr)^T \vx_\nu\Bigr)
    \; = \mathcal{F}_{H_0},  
\end{align*}
which completes the proof.

\subsection{Proof of Theorem \ref{thm:relu_saddles}}
\label{sec:proof_saddle_relu}

We use the same reparameterization $(\vv_1,\ldots,\vv_{H_0-1},\va,\vw_1,\ldots,\vw_{H_0-1},\vb,\vxi_{H_0+1},\ldots,\vxi_{H},\veta_{H_0+1},\ldots,\veta_H)$ as in Section \ref{sec:inv_relu} with $A\vgamma=0$.  We focus on the behavior of $L_H$ for a change of $\vxi_c,\veta_c$ with the others fixed at the values of $\thH_\vgamma$.  Note that, by the assumption $\vu_{H_0}^T\vx_\nu\neq 0$ for any $\nu$, $L_H(\thH)$ is differrentiable at $\thH_\vgamma$ with respect to $\vxi_c,\veta_c$.  By the same manner as Lemma \ref{lma:1st_der}, we have 
\[
\frac{\partial L_H(\thH)}{\partial\veta_c}\Bigl|_{\thH=\thH_\vgamma} = O,\qquad \frac{\partial L_H(\thH)}{\partial\vxi_c}\Bigl|_{\thH=\thH_\vgamma} = O,  
\]
which means $L_H$ is stationary at $\thH_\vgamma$ as a function of  $\veta_c$ and $\vxi_c$.  

From Lemma \ref{lma:Hessian}, we have 
\[
\frac{\partial^2 L_H(\thH)}{\partial\vxi_c\partial\vxi_d}\Bigl|_{\thH=\thH_\gamma} = O
\]
and 
\[
\frac{\partial^2 L_H(\thH)}{\partial\vxi_c\partial\veta_d}\Bigl|_{\thH=\thH_\gamma} = (A\Lambda A^T)_{cd}\sum_{\nu:\vu_{H_0*}^T\vx_\nu>0} \frac{\partial \ell_\nu(\thH_\vgamma)}{\partial\vz_\nu} \vx_\nu^T.
\]
Using the fact $\frac{\partial^2 \varphi(\vx_\nu;\vu_{H_0*})}{\partial \vu_{H_0}\vu_{H_0}}=0$, we have
\[ \frac{\partial^2 L_H(\thH)}{\partial\veta_c\partial\veta_d}\Bigl|_{\thH=\thH_\gamma} = O.
\]
Therefore, the Hessian of $L_H$ at $\thH_\vgamma$ with respect to $\vxi_a,\veta_b$ is given by
\[
\begin{pmatrix}
O & \tilde{F} \\ \tilde{F}^T & O
\end{pmatrix}
\]
where $\tilde{F}=(A \Lambda A^T)\otimes F$.  Under the assumption that $F\neq O$, the eigenvalues of the above Hessian are $\{\delta_i, -\delta_i\}_{i=1}^r$, where $\{\delta_i\}_{i=1}^r$ is the singular values of $\tilde{F}$. This means there are increasing directions and decreasing directions of $L_H$ around $\thH_\vgamma$, and thus it is a saddle point.

\section{PAC-Bayesian bound of generalization}
\label{sec:pac-bayes}

\subsection{Brief summary of general PAC-Bayes bound}
The PAC-Bayesian framework \cite{Mcallester_2003_pac-bayesian,McAllester1999} has been developed for bounding generalization performance of learning models.  It has been recently applied also to analysis of generalization of neural networks \cite{Neyshabur_etal_ICLR2018}. The following form of the bound is taken from \cite{Mcallester_2003_pac-bayesian}.

Let $f(\vx;\vtheta)$ be a real-valued function of $\vx$ with parameter $\vtheta\in\Theta$.  We consider the case that the loss function $\ell(\vy;\vz)$ is bounded, and without loss of generality assume $\ell(\vy,\vz)\in[0,1]$. Training data $(\vx_1,\vy_1),\ldots,(\vx_n,\vy_n)$ is an i.i.d.~sample from a distribution $\mathcal{D}$ on $(\vx,\vy)$.  Given function $f(\vx;\vtheta)$, the training error (or empirical risk) is evaluated by 
\[
\hat{L}(\vtheta) = \frac{1}{n}\sum_{\nu=1}^n \ell(\vf(\vx_\nu,\vtheta),\vy_\nu)
\]
and the generalization error (or risk) is defined by
\[
L(\vtheta) = E_\mathcal{D}[\ell(\vf(\vx_\nu,\vtheta),\vy_\nu) ].
\]

In PAC-Bayes bound, we introduce a "prior" distribution $P$ on the parameter space with an assumption that $P$ does not depend on the training sample, and an arbitrary probability distribution $Q$ on $\Theta$.  The distribution $Q$ may depend on the training sample.  Then, for any $\delta>0$, the inequality 
\begin{equation}\label{eq:pac-bayes_org}
E_{Q}[L(\vtheta)] \leq E_Q[\hat{L}(\vtheta)] + 2\sqrt{\frac{2 ( KL(Q||P) + \ln\frac{n}{\delta})}{n-1}}
\end{equation}
holds for sufficiently large $n$ with probability greater than $1-\delta$. 

First, we can see that, if the distribution of $Q$ is concentrated on a parameter set that gives very close values to $L(\hat{\vtheta})$ or $\hat{L}(\hat{\vtheta})$ at a parameter $\hat{\vtheta}$ obtained by learning, then we have 
\[
E_Q[L(\vtheta)]\approx L(\hat{\vtheta}), \qquad E_Q[\hat{L}(\vtheta)]\approx \hat{L}(\hat{\vtheta}).
\]
In such cases, \eq{eq:pac-bayes_org} shows the behavior of generalization error by its upper bound involving the approximate training error and the complexity term, which is expressed by the KL-divergence. 

\subsection{Generalization error bounds of embedded networks}
\label{sec:gen_error_detail}

The difference of the semi-flatness between networks of the smooth and ReLU activation can be related to the different generalization abilities of these models trough the PAC-Bayes bound \eq{eq:pac-bayes_org}. 

\subsubsection{Choice in general cases}
\label{sec:PQ_sharp}
First we consider the general problem of choosing $P$ and $Q$ appropriately when the minimum of $\hat{L}(\vtheta)$ is sharp (non-flat) and can be approximated locally by a quadratic function around $\hat{\vtheta}$, which is a minimum of $\hat{L}(\thH)$. The prior $P$ should be non-informative, and thus if $\Theta=\R^d$, a normal distribution $N(0,\sigma^2 I_d)$ with a large $\sigma$ is a reasonable choice.  
To relate the PAC-Bayes bound \eq{eq:pac-bayes_org} to the generalization error at $\hat{\vtheta}$, the distribution $Q$ (posterior) should distribute on parameters that do not change the empirical risk values so much from the values given by $\hat{\vtheta}$.  Under the assumption that $\hat{L}(\vtheta)$ is well approximated by a quardatic function, We set $Q$ by a normal distribution $N(0,\tau^2 \mathcal{H}^{-1})$ where 
$\mathcal{H}$ is the Hessian 
\[
\mathcal{H}:=\nabla^2 \hat{L}(\hat{\vtheta})
\]
with a small value of $\tau$.  Using the variance-covariance matrices based on the inverse Hessian is confirmed as follows. 
Suppose we set $Q$ by $N(\hat{\vtheta},\Sigma)$ with a general $\Sigma$ such that $\Sigma \ll \sigma^2$. Then, the Taylor series approximation of $\hat{L}(\thH)$ gives 
\[
    E_Q[\hat{L}(\thH)] \approx \hat{L}(\hat{\vtheta}^{(H)}) + \frac{1}{2} \tr[\mathcal{H}\Sigma],
\]
and thus the right hand side of \eq{eq:pac-bayes_org} is approximated by 
\begin{equation}\label{eq:Sigma}
\hat{L}(\hat{\vtheta}^{(H)}) + \frac{1}{2} \tr[\mathcal{H}\Sigma]+ 2\sqrt{\frac{2 ( KL(Q||P) + \ln\frac{n}{\delta})}{n-1}}.
\end{equation}
It is well known that $KL(Q||P)$ with $P$ and $Q$ normal distributions is given by 
\[
KL(Q||P) = \frac{1}{2} \Bigl[ \log\frac{|\sigma^2 I_d|}{|\Sigma|} + \tr[\sigma^{-2}\Sigma] + \frac{\|\hat{\vtheta}\|^2}{\sigma^2} - d \Bigr]
\]
To minimize \eq{eq:Sigma} with respect to  $\Sigma$, the differentiation provides the stationary condition 
\[
\mathcal{H} + \lambda \bigl( - \Sigma^{-1} + \sigma^{-2}I_d \bigr) = O
\]
with some positive constant $\lambda$.  From the assumption $\sigma^2 \gg \Sigma$, by neglecting $\sigma^{-2} I_d$, an approximate solution is given by 
\[
\Sigma_{opt} \approx \tau^2 \mathcal{H}^{-1},
\]
where $\tau>0$ is a scalar.  Plugging this to \eq{eq:Sigma} provides 
\[
\hat{L}(\hat{\vtheta}^{(H)}) + \frac{\tau^2}{2} d + 2\sqrt{\frac{2 \bigl\{ d\log \frac{\sigma^2}{\tau^2} + \log\det \mathcal{H} + \frac{\tau^2}{\sigma^2}\tr\bigl[\mathcal{H}^{-1}\bigr] +\frac{\|\hat{\vtheta}\|^2}{\sigma^2} - d\bigr\} + 2\ln\frac{n}{\delta}}{n-1}}.
\]
The second term is linear to $\tau^2$, and the main factor in the third term is $(d\log \frac{\sigma^2}{\tau^2} )^{1/2} n^{-1/2}$ when $\sigma\gg 1$ and $\tau\ll 1$.


\subsubsection{The case of inactive units}
\label{sec:Hess_iu}
We now discuss the embedding of the smooth and ReLU networks by inactive units when the training error achieves zero error.  As discussed in Section \ref{sec:zero_error}, some of the parameters give flat-directions, which requires some modification of the arguments in Section \ref{sec:PQ_sharp}. 

As notations, $\thH_{sm}\in\R^{d_{sm}}$ and $\thH_{rl}\in\R^{d_{rl}}$ are used for the parameters of networks with smooth and ReLU activation, respectively, and they are decomposed as $\thH_{sm}=(\thH_{sm,0}, \thH_{sm,1},\thH_{sm,2})$ and $\thH_{rl}=(\thH_{rl,0}, \thH_{rl,1}, \thH_{rl,2})$, corresponding to the components of a copy of $\thHz$, $(\vv_j)_{j=H_0+1}^H$, and $(\vw_j)_{j=H_0+1}^H$. Note that the both models have the same number of surplus parameters, i.e. $\text{dim}(\thH_{sm,1})=\text{dim}(\thH_{sm,2})=:d_1$ and $\text{dim}(\thH_{rl,1})=\text{dim}(\thH_{rl,2})=:d_2$.   
Different choices of $P$ and $Q$ are employed in the smooth and ReLU networks: we use $P_{sm}, Q_{sm}$ for the smooth networks and $P_{rl},Q_{rl}$ for the ReLU case. 

For the smooth activation, as in Section \ref{sec:PQ_sharp}, a non-informative prior
\[
P_{sm}:\quad N(0,\sigma^2 I)
\]
is used with $\sigma\gg 1$.  For the distribution $Q_{sm}$, 
we reflect the Hessian at the embedding by inactive units.  By the definition, the directions of $(\vv_j)_{j=H_0+1}^H$ give flat surface to $L_H$. The Hessian with respect to $(\vv_j,\vw_j)_{j=H_0+1}^H$ is thus given in the form 
\[
\begin{pmatrix}
O & O \\
O & S
\end{pmatrix},
\]
where $S$ is an $(H-H_0)\times D$ dimensional symmetric matrix given by 
\begin{multline*}
S_{jk} = \sum_{\nu=1}^n \vv_j^T\frac{\partial^2 \ell_\nu(\hat{\vtheta})}{\partial\vz\partial\vz} \vv_k \frac{\partial\varphi(\vx_\nu;\vw^{(0)})}{\partial\vw_j}\frac{\partial\varphi(\vx_\nu;\vw^{(0)})}{\partial\vw_k}
+ \delta_{jk}\sum_{\nu=1}^n \frac{\partial \ell_\nu(\hat{\vtheta})}{\partial\vz} \vv_j \frac{\partial^2\varphi(\vx_\nu;\vw^{(0)})}{\partial\vw_j\partial\vw_k}.
\end{multline*}
For the flat directions of $(\vv_j)_{j=H_0+1}^H$, the same distribution as $P$ is optimal for the upper bound.  Reflecting this, we set 
\[
Q_{sm}:\quad N(\hat{\vtheta}^{(H)}_{sm,0}, \tau^2 \mathcal{H}_{sm}^{-1})\times N(\hat{\vtheta}^{(H)}_{sm,1}, \sigma^2 I_{d^1})\times N(\hat{\vtheta}^{(H)}_{sm,2}, \tau^2 S^{-1}),
\]
where $\hat{\vtheta}^{(H)}_{sm}$ is the embedded point and $\mathcal{H}_{sm}:=\nabla^2 L_{H_0}({\thHz_{*,sm}})$ is the Hessian of the narrower network. 

For the ReLU networks, we first fix $K>1$ as a constant.  Since in the direction of $(\vw_j)_{j=H_0+1}^H$ we can presume the existence of the bonded flat subset $B_K^{H-H_0}$, we define the prior $P_{rl}$ by 
\[
P_{rl}:\quad N(0,\sigma^2 I_{d^0})\times N(0,\sigma^2 I_{d^1})\times \text{Unif}_{B_K^{H-H_0}}.
\]
Reflecting the flat directions, the posterior $Q_{rl}$ is defined by 
\[
Q_{rl}:\quad N(\hat{\vtheta}^{(H)}_{rl,0}, \tau^2 \mathcal{H}_{rl}^{-1})\times N(\hat{\vtheta}^{(H)}_{rl,1}, \sigma^2 I_{d^1})\times \text{Unif}_{B_K^{H-H_0}},
\]
where $\mathcal{H}_{rl}:=\nabla^2 L_{H_0}({\thHz_{*,rl}})$ is the Hessian of the narrower network.

With these choices, the KL divergence of the smooth case is given by 
\begin{multline*}
KL(Q_{sm}||P_{sm}) = \frac{1}{2} \Bigl[ d^0_{sm}\log\frac{\sigma^2 }{\tau^2}  + d^{1}\log\frac{\sigma^2}{\tau^2} + \log\det \mathcal{H}_{sm} + \log \det S \\
+ \tr\left[\frac{\tau^2}{\sigma^{2}}\bigl(\mathcal{H}_{sm}^{-1} +S^{-1}\bigr)\right] + \frac{\|\hat{\vtheta}_{sm}\|^2}{\sigma^2} - d^0_{sm}+d^{1} \Bigr],    
\end{multline*}
while in the case of ReLU networks,
\[
KL(Q_{rl}||P_{rl}) = \frac{1}{2} \Bigl[ d^0_{rl}\log\frac{\sigma^2 }{\tau^2} +  \log\det \mathcal{H}_{rl}  +\tr\left[\frac{\tau^2}{\sigma^{2}}\mathcal{H}_{rl}^{-1}\right]+ \frac{\|\hat{\vtheta}_{rl}\|^2}{\sigma^2} - d^0_{rl} \Bigr].
\]
With $\sigma\gg 1$ and $\tau\ll 1$, the major difference between these divergences comes from the term 
\[
    d^{1} \log \frac{\sigma^2}{\tau^2}
\]
in the smooth networks.  This suggests the advantage of the ReLU network in the overparameterized realization of zero training error in terms of the PAC-Bayesian upper bound of generalization error. 

\subsubsection{The Hessian for the zero error cases}
\label{sec:Hess_others}

We summarize the Hessian matrix for the embedding of a global minimum that attains zero training error.  For simplicity, we write only the four blocks corresponding to the surplus units. 

{\bf Smooth activation}

(I) Unit replication: As discussed in Sections \ref{sec:embed_min} and \ref{sec:zero_error}, the the part of the Hessian is given by 
\begin{equation}
\begin{pmatrix}
O & O \\ O & \tilde{G}
\end{pmatrix}.
\end{equation}

(II) Inactive units:  The part of the Hessian is given by
\begin{equation}
\begin{pmatrix}
O & O \\ O & S_1
\end{pmatrix},
\end{equation}
where 
\begin{align*}
(S_1)_{jk}  = \frac{\partial^2 L_H(\hat{\vtheta})}{\partial\vw_j\partial\vw_k}  
&  = \sum_{\nu=1}^n \vv_j^T\frac{\partial^2  \ell_\nu(\hat{\vtheta})}{\partial\vz_\nu\partial\vz_\nu} \vv_k
\frac{\partial\varphi(\vx_\nu;\vw^{(0)})}{\partial\vw}\frac{\partial\varphi(\vx_\nu;\vw^{(0)})}{\partial\vw}^T \\
& \qquad + \delta_{jk}\sum_{\nu=1}^n  \frac{\partial  \ell_\nu(\hat{\vtheta})}{\partial\vz_\nu}\vv_j \frac{\partial^2\varphi(\vx_\nu;\vw^{(0)})}{\partial\vw\partial\vw}.
\end{align*}

(III) Inactive propagations: 
The part of the Hessian is given by
\begin{equation}
\begin{pmatrix}
S_2 & O \\ O & O
\end{pmatrix},
\end{equation}
where
\begin{align*}
(S_2)_{jk}  = \frac{\partial^2 L_H(\hat{\vtheta})}{\partial\vv_j\partial\vv_k}  
&  = \sum_{\nu=1}^n \frac{\partial^2  \ell_\nu(\hat{\vtheta})}{\partial\vz_\nu\partial\vz_\nu}
\varphi(\vx_\nu;\vw_j)\varphi(\vx_\nu;\vw_k).
\end{align*}

We see that in all of the three cases the part of the Hessian for the surplus parameters contains a non-zero block. 

{\bf ReLU}

(I)$_R$ Unit replication: As discussed in Sections \ref{sec:localmin_relu}, the the part of the Hessian is given by $\bigl(\begin{smallmatrix}O & \tilde{F} \\ \tilde{F}^T & O \end{smallmatrix}\bigr)$.  Since the embedded point must not be a saddle, we have $\tilde{F}=O$. As a result, the part of the Hessian is constant zero.

(II)$_R$ Inactive units:  As discussed in Section \ref{sec:zero_error}, the part of the Hessian is zero. 

(III)$_R$ Inactive propagations: 
In this case, the part of the Hessian is given by
\begin{equation}
\begin{pmatrix}
S_3 & O \\ O & O
\end{pmatrix},
\end{equation}
where
\begin{align*}
(S_2)_{jk}  = \frac{\partial^2 L_H(\hat{\vtheta})}{\partial\vv_j\partial\vv_k}  
&  = \sum_{\nu=1}^n \frac{\partial^2  \ell_\nu(\hat{\vtheta})}{\partial\vz_\nu\partial\vz_\nu}
\varphi(\vx_\nu;\vw_j)\varphi(\vx_\nu;\vw_k)
\end{align*}
which is not necessarily zero unless $\varphi(\vx_\nu;\vw_j)=0$ for all $\nu$. 

We can see that the embedding by inactive units and unit replication give zero matrix for the part of Hessian, while the inactive propagation does not necessarily has zero matrix.

\bibliographystyle{abbrvnat}
\bibliography{fukumizu}


\end{document}